\g@addto@macro\normalsize{%
  \setlength{\abovedisplayskip}{-7pt}
  \setlength{\belowdisplayskip}{2pt}
  \setlength{\abovedisplayshortskip}{-7pt}
  \setlength{\belowdisplayshortskip}{3pt}
}
\providecommand\citet[2][]{%
  \edef\@tempa{#1}
  \citeauthor{#2} (%
    \citeyear{#2}%
    \ifx\@empty\@tempa\else,~#1\fi
  )
}
\setlist[enumerate]{leftmargin=1.5em, topsep=0pt, itemsep=1pt}
\def\thm@space@setup{%
  \thm@preskip=1em
  \thm@postskip=\thm@preskip % or whatever, if you don't want them to be equal
}
\newcommand*{\sub}[1]{\ensuremath{_\textrm{{\scriptsize #1}}}}
\newcommand*{\super}[1]{\ensuremath{^\textrm{{\scriptsize #1}}}}
\newtheorem{theorem}{Theorem}
\newtheorem{lemma}{Lemma}
\DeclareMathOperator*{\argmin}{arg\,min}
\begin{document}

\twocolumn[
\icmltitle{Classification from Positive, Unlabeled and Biased Negative Data}

\begin{icmlauthorlist}
  \icmlauthor{Yu-Guan Hsieh $^{*}$}{ens}
  \icmlauthor{Gang Niu}{riken}
  \icmlauthor{Masashi Sugiyama}{riken,todai}
\end{icmlauthorlist}

\icmlaffiliation{ens}{\'{E}cole Normale Sup\'{e}rieure, Paris, France}
\icmlaffiliation{riken}{RIKEN, Tokyo, Japan}
\icmlaffiliation{todai}{The University of Tokyo, Tokyo, Japan}

\icmlcorrespondingauthor{Yu-Guan Hsieh}{yu-guan.hsieh@ens.fr}

\icmlkeywords{Positive-Unlabeled Learning, Dataset Shift,
Empirical Risk Minimization, Semi-supervised Learning}

\vskip 0.3in
]

\printAffiliationsAndNotice{}

\begin{abstract}
  In binary classification, there are situations
  where negative (N) data are too diverse to
  be fully labeled and
  we often resort to positive-unlabeled (PU)
  learning in these scenarios.
  %Positive-unlabeled (PU) learning addresses 
  %the problem of learning a binary classifier
  %from positive (P) and unlabeled (U) data.
  %It is often applied to situations where
  %negative (N) data are difficult to be fully labeled.
  However, collecting a non-representative N set that
  contains only a small portion of all possible N
  data can often be much easier in practice.
  %in many practical situations.
  This paper studies a novel classification
  framework which incorporates such biased N (bN) data
  in PU learning.
  We provide a method based on empirical risk minimization
  to address this PUbN classification problem.
  Our approach can
  be regarded as a novel example-weighting
  algorithm, with
  the weight of each example computed through
  a preliminary step that draws inspiration from PU learning.
  We also derive an estimation error bound for the proposed
  method.
  Experimental results demonstrate the effectiveness
  of our algorithm in not only PUbN learning scenarios
  but also ordinary PU learning scenarios on several
  benchmark datasets.
\end{abstract}

\section{Introduction}

In conventional binary classification, examples are labeled as
either positive (P) or negative (N), and we train a classifier on
these labeled examples.
On the contrary, positive-unlabeled (PU) learning addresses the
problem of learning a classifier from P and unlabeled (U)
data, without the need of explicitly identifying N data
\citep{Elkan2008LearningCF, Ward2009PresenceonlyDA}.

PU learning finds its usefulness in many real-world problems.
For example, in one-class remote sensing classification
\citep{li2011positive},
we seek to extract a specific land-cover class from an image.
While it is easy to label examples of this specific land-cover
class of interest, examples not belonging to this
class are too diverse to be exhaustively annotated.
The same problem arises in text classification, as it is
difficult or even impossible to compile a set of N
samples that provides a comprehensive characterization of
everything that is not in the P class
\citep{liu2003building, fung2006text}.
Besides, PU learning has also been applied to other domains
such as outlier detection \citep{hido2008inlier, scott2009novelty},
medical diagnosis \citep{zu2011learn},
or time series classification \citep{nguyen2011positive}.

By carefully examining the above examples, we find out that
the most difficult step is often to collect a fully representative
N set, whereas only labeling a small portion of all possible N
data is relatively easy.
Therefore, in this paper, we propose to study the problem of
learning from P, U and biased N (bN) data, which we name PUbN learning
hereinafter.
We suppose that in addition to P and U data, we also gather
a set of bN samples, governed by a distribution
distinct from the true N distribution.
As described previously, this can be viewed as an extension
of PU learning, but such bias may also occur naturally in
some real-world scenarios.
For instance, let us presume that we would like to judge
whether a subject
is affected by a particular disease based on the result
of a physical examination.
While the data collected from the patients represent rather
well the P distribution,
healthy subjects that request the examination
are in general
%highly
biased with respect to the whole healthy
subject population.

We are not the first to be interested in learning with
bN data.
In fact, both \citet{li2010neg} and \citet{fei2015social}
attempted to solve similar problems in the context of text
classification.
\citet{li2010neg} simply discarded N samples and
performed ordinary PU classification.
It was also mentioned in the paper that bN data
could be harmful.
\citet{fei2015social} adopted another strategy.
The authors considered even gathering unbiased U data
is difficult and learned the classifier from only
P and bN data.
However, their method is specific to text classification
because it relies on the use of effective similarity
measures to evaluate similarity between documents
(refer to Supplementary Material \ref*{app: cbs}
for a deeper discussion
and an empirical comparison with our method).
Therefore, our work differs from these two in that
the classifier is trained simultaneously on P, U and bN
data, without resorting to domain-specific knowledge.
The presence of U data allows us to address the problem
from a statistical viewpoint, and thus the proposed method can
be applied to any PUbN learning problem in principle.

In this paper,
we develop an empirical risk minimization-based algorithm
that combines both PU learning and importance weighting
to solve the PUbN classification problem.
We first estimate the probability that an example
is sampled into the P or the bN set.
Based on this estimate, we regard bN and U data
as N examples with instance-dependent weights.
In particular, we assign larger weights to U
examples that we believe to appear less often in the
P and bN sets.
P data are treated as P examples with unity weight
but also as N examples with usually small or zero weight
whose actual value depends on the same estimate.

The contributions of the paper are three-fold:

\begin{enumerate}
  \item
    We formulate the PUbN learning problem as an extension
    of PU learning and propose an empirical risk minimization-based
    method to address the problem.
    We also theoretically establish an estimation error bound
    for the proposed method.
  \item
    We experimentally demonstrate that the classification performance
    can be effectively improved thanks to the use of
    bN data during training.
    In other words, PUbN learning yields better performance than
    PU learning.
  \item
    Our method can be easily adapted to ordinary PU learning.
    Experimentally we show that the resulting algorithm allows
    us to obtain new
    state-of-the-art results on several PU learning tasks.
\end{enumerate}

\textbf{Relation with Semi-supervised Learning.}\enspace
With P, N and U data available for training,
our problem setup may seem similar to that of
semi-supervised learning
\citep{Chapelle:2010:SL:1841234, oliver2018realistic}.
Nonetheless, in our case, N data are biased and often
represent only a small portion of the whole N distribution.
Therefore, most of the existing methods designed for
the latter cannot be directly applied to
the PUbN classification problem.
Furthermore, our focus is on deducing a risk estimator
using the three sets of data,
with U data in particular used to compensate the sampling
bias in N data.
On the other hand, in semi-supervised learning
the main concern is often how U data
can be utilized for regularization
\citep{grandvalet2005semi, belkin2006manifold,
miyato2016distributional, laine2017temporal}.
The two should be compatible and we believe
adding such regularization to our algorithm can
be beneficial in many cases.

\textbf{Relation with Dataset Shift.}\enspace
PUbN learning can also be viewed as a special case of
dataset shift%
\footnote{
  Dataset shift refers to any case where training
  and test distributions differ.
  The term sample selection bias
  \citep{heckman1979sample, zadrozny2004learning}
  is sometimes used to describe the same thing.
  However, strictly speaking, sample selection bias
  actually refers to the case where training
  instances are first drawn from the test distributions
  and then a subset of these data is systematically
  discarded due to a particular mechanism.
}
\citep{quionero2009dataset}
if we consider that P and bN data are drawn
from the training distribution while
U data are drawn from the test distribution.
Covariate shift
\citep{shimodaira2000improving, book:Sugiyama+Kawanabe:2012}
is another special case of dataset shift that has been
studied intensively.
In the covariate shift problem setting, training
and test distributions have the same class conditional
distribution and only differ in the marginal distribution
of the independent variable.
One popular approach to tackle this problem
is to reweight each training example according
to the ratio of the test density to the training density
\citep{huang2007correcting, sugiyama2008direct}.
Nevertheless, 
simply training a classifier on a reweighted version of
the labeled set is not sufficient in our case
since there may be examples with zero probability to be
labeled, and it is therefore essential to involve
U samples in the second step of the proposed algorithm.
It is also important to notice that 
the problem of PUbN learning is
intrinsically different from that of covariate shift and
neither of the two is a special case of the other.

Finally, source component shift
\citep{quionero2009dataset} is also related.
It assumes that data are generated from several
different sources and the proportions of these
sources may vary between training and test times.
In many practical situations, this is indeed what causes
our collected N data to be biased.
However, its definition is so general that
we are not aware of any universal method which
addresses this problem without explicit model assumptions on
data distribution.

\section{Problem Setting}

In this section, we briefly review the formulations of PN,
PU and PNU classification and introduce the problem of
learning from P, U and bN data.

\subsection{Standard Binary Classification}

Let $\bm{x}\in\mathbb{R}^d$ and $y\in\{+1, -1\}$ be
random variables following an unknown probability distribution
with density
$p(\bm{x}, y)$. Let $g: \mathbb{R}^d \rightarrow \mathbb{R}$
be an arbitrary decision function for binary classification
and $\ell: \mathbb{R}\rightarrow\mathbb{R}_+$ be a loss
function of margin $yg(\bm{x})$ that usually takes a
small value for a large margin.
The goal of binary classification is to find $g$ that minimizes
the classification risk:

\begin{equation} \label{eq: risk}
  R(g) = \mathbb{E}_{(\bm{x},y) \sim p(\bm{x}, y)}[\ell(yg(\bm{x}))],
\end{equation}

where $\mathbb{E}_{(\bm{x}, y)\sim p(\bm{x}, y)}[\cdot]$ denotes
the expectation over the joint distribution $p(\bm{x}, y)$.
When we care about classification accuracy, $\ell$ is the
zero-one loss $\ell_{01}(z) = (1-\mathrm{sign}(z))/2$.
However, for ease of optimization, $\ell_{01}$ is often
substituted with a surrogate loss such as the sigmoid loss
$\ell\sub{sig}(z) = 1/(1+\exp(z))$ or the logistic loss
$\ell\sub{log}(z) = \ln(1+\exp(-z))$ during learning.

In standard supervised learning scenarios (PN classification),
we are given P and N data that are sampled
independently from
$p\sub{P}(\bm{x}) = p(\bm{x}\mid y=+1)$ and
$p\sub{N}(\bm{x}) = p(\bm{x}\mid y=-1)$
as $\mathcal{X}\sub{P} = \{\bm{x}_i\super{P}\}_{i=1}^{n\sub{P}}$
and $\mathcal{X}\sub{N} = \{\bm{x}_i\super{N}\}_{i=1}^{n\sub{N}}$.
Let us denote by
$R\sub{P}^+(g) = \mathbb{E}_{\bm{x} \sim p\sub{P}(\bm{x})}[\ell(g(\bm{x}))]$,
$R\sub{N}^-(g) = \mathbb{E}_{\bm{x} \sim p\sub{N}(\bm{x})}[\ell(-g(\bm{x}))]$
partial risks and $\pi = p(y=1)$ the P prior.
We have the equality $R(g) = \pi R\sub{P}^+(g)+(1-\pi) R\sub{N}^-(g)$.
The classification risk \eqref{eq: risk} can then be
empirically approximated from data by

\[\hat{R}\sub{PN}(g) = \pi \hat{R}\sub{P}^+(g)+(1-\pi) \hat{R}\sub{N}^-(g),\]

where $\hat{R}\sub{P}^+(g) =
\frac{1}{n\sub{P}} \sum_{i=1}^{n\sub{P}} \ell(g(\bm{x}_i\super{P}))$
and $\hat{R}\sub{N}^-(g) =
\frac{1}{n\sub{N}} \sum_{i=1}^{n\sub{N}} \ell(-g(\bm{x}_i\super{N}))$.
By minimizing $\hat{R}\sub{PN}(g)$ we obtain the ordinary
empirical risk minimizer $\hat{g}\sub{PN}$.

\subsection{PU Classification}

In PU classification, instead of N data $\mathcal{X}\sub{N}$
we have only access to
$\mathcal{X}\sub{U} = \{x_i\super{U}\}_{i=1}^{n\sub{U}} \sim p(\bm{x})$
a set of U samples drawn from the marginal density $p(\bm{x})$.
Several effective algorithms have been designed to
address this problem.
\citet{liu2002partially} proposed the S-EM approach that first
identifies reliable N data in the U set and then run
the Expectation-Maximization (EM) algorithm to build the final classifier.
The biased support vector machine (Biased SVM) introduced by
\citet{liu2003building}
regards U samples as N samples with smaller weights.
\citet{mordelet2014bagging} solved the PU problem by
aggregating classifiers trained to discriminate P data from
a small random subsample of U data.
An ad hoc algorithm designed for linear classifiers,
treating the U set as an N set influenced by label noise,
was proposed in \cite{ijcai2018-373}. 

Recently, attention has also been paid on the unbiased risk
estimator proposed by \citeauthor{du2014analysis}
\yrcite{du2014analysis, du2015convex}.
The key idea is to use the following equality:

\[(1-\pi)R\sub{N}^-(g) = R\sub{U}^-(g)-\pi R\sub{P}^-(g),\]

where
$R\sub{U}^-(g) = \mathbb{E}_{x \sim p(\bm{x})}[\ell(-g(\bm{x}))]$
and
$R\sub{P}^-(g) = \mathbb{E}_{x \sim p\sub{P}(\bm{x})}[\ell(-g(\bm{x}))].$
This equality is acquired by exploiting the fact
$p(\bm{x}) = \pi p\sub{P}(\bm{x}) + (1-\pi) p\sub{N}(\bm{x})$.
As a result, we can approximate the classification risk
\eqref{eq: risk} by

\begin{equation} \label{eq: pu risk estimator}
  \hat{R}\sub{PU}(g)
  = \pi\hat{R}\sub{P}^+(g)-\pi\hat{R}\sub{P}^-(g)+\hat{R}\sub{U}^-(g),
\end{equation}

where $\hat{R}\sub{P}^-(g) =
\frac{1}{n\sub{P}} \sum_{i=1}^{n\sub{P}} \ell(-g(\bm{x}_i\super{P}))$
and $\hat{R}\sub{U}^-(g) =
\frac{1}{n\sub{U}} \sum_{i=1}^{n\sub{U}} \ell(-g(\bm{x}_i\super{U}))$.
We then minimize $\hat{R}\sub{PU}(g)$ to obtain another
empirical risk minimizer $\hat{g}\sub{PU}$.
Note that as the loss is always positive, the classification risk
\eqref{eq: risk} that $\hat{R}\sub{PU}(g)$ approximates is also positive.
However, \citet{kiryo2017positive} pointed out that when the
model of $g$ is too flexible, that is, when the function class
$\mathcal{G}$ is too large, $\hat{R}\sub{PU}(\hat{g}\sub{PU})$
indeed goes negative and the model severely overfits
the training data. To alleviate overfitting, the authors observed
that $R\sub{U}^-(g)-\pi R\sub{P}^-(g) = (1-\pi)R\sub{N}^-(g) \ge 0$
and proposed the non-negative risk estimator for PU learning:

\begin{equation} \label{eq: nnpu risk estimator}
  \tilde{R}\sub{PU}(g) =
  \pi\hat{R}\sub{P}^+(g)
  + \max\{0, \hat{R}\sub{U}^-(g)-\pi\hat{R}\sub{P}^-(g)\}.
\end{equation}

In terms of implementation, stochastic optimization was used
and when $r = \hat{R}\sub{U}^-(g)-\pi\hat{R}\sub{P}^-(g)$ becomes
smaller than some threshold value $-\beta$ for a mini-batch,
they performed a step of gradient ascent
along $\nabla r$ to make the mini-batch less overfitted.

\subsection{PNU Classification} \label{subsec: pnu}

In semi-supervised learning (PNU classification), P, N and U
data are all available.
An abundance of works have been dedicated to solving this problem.
Here we in particular introduce the PNU risk estimator
proposed by \citet{sakai2016semi}.
By directly leveraging U data for risk estimation,
it is the most comparable to our method.
The PNU risk is simply defined as a linear combination of PN
and PU/NU risks. Let us just consider the case where PN
and PU risks are combined, then for some $\gamma\in[0, 1]$,
the PNU risk estimator is expressed as

\begin{align} \label{eq: pnu risk estimator}
  \hat{R}\sub{PNU}^{\gamma}(g)
  &= \gamma\hat{R}\sub{PN}(g) + (1-\gamma)\hat{R}\sub{PU}(g)\nonumber\\
  &= \pi\hat{R}\sub{P}^+(g) + \gamma(1-\pi)\hat{R}\sub{N}^-(g)\nonumber\\
  &\enspace + (1-\gamma)(\hat{R}\sub{U}^-(g)-\pi\hat{R}\sub{P}^-(g)).
\end{align}

We can again consider the non-negative correction by forcing
the term
$\gamma(1-\pi)\hat{R}\sub{N}^-(g) +
 (1-\gamma)(\hat{R}\sub{U}^-(g)-\pi\hat{R}\sub{P}^-(g))$
to be non-negative. In the rest of the paper, we refer to the resulting
algorithm as non-negative PNU (nnPNU) learning
(see Supplementary Material \ref*{app: pnu-alt}
for an alternative definition
of nnPNU and the corresponding results).

\subsection{PUbN Classification}

In this paper, we study the problem of PUbN learning.
It differs from usual semi-supervised learning
in the fact that labeled N data are not fully representative
of the underlying N distribution $p\sub{N}(\bm{x})$.
To take this point into account, we introduce a
latent random variable $s$ and consider the
joint distribution $p(\bm{x}, y, s)$
with constraint $p(s=+1\mid\bm{x}, y=+1)=1$.
Equivalently, $p(y=-1\mid\bm{x}, s=-1)=1$.
Let $\rho = p(y=-1, s=+1)$.
Both $\pi$ and $\rho$ are assumed known throughout the paper.
In practice they often need to be estimated from data 
\citep{jain2016estimating, ramaswamy2016mixture, 
Plessis:2017:CEL:3085961.3085999}.
In place of ordinary N data we collect a set of
bN samples 

\[\mathcal{X}\sub{bN} = \{\bm{x}_i\super{bN}\}_{i=1}^{n\sub{bN}}
  \sim p\sub{bN}(\bm{x}) = p(\bm{x}|y=-1, s=+1).\]

For instance, in text classification, if our bN data is composed
of a small set of all possible N topics, $s=+1$ means that a sample
is either from these topics that make up the bN set or in the P class.
The goal remains the same: we would like to minimize
the classification risk $\eqref{eq: risk}$.

\section{Method}

In this section, we propose a risk estimator for PUbN classification
and establish an estimation error bound for the proposed method.
Finally we show how our method can be applied to PU
learning as a special case when no bN data are available.

\subsection{Risk Estimator}

Let
$R\sub{bN}^-(g) =
\mathbb{E}_{x \sim p\sub{bN}(\bm{x})}[\ell(-g(\bm{x}))]$
and
$R_{s=-1}^-(g) = \mathbb{E}_{x \sim p(\bm{x}\mid s=-1)}[\ell(-g(\bm{x}))]$.
Since
$p(\bm{x}, y=-1) = p(\bm{x}, y=-1, s=+1) + p(\bm{x}, s=-1)$,
we have 

\begin{equation} \label{eq: PUbN risk}
  R(g) = \pi R\sub{P}^+(g) + \rho R\sub{bN}^-(g) + (1-\pi-\rho)R_{s=-1}^-(g).
\end{equation}

The first two terms on the right-hand side of the equation can
be approximated directly from data by
$\hat{R}\sub{P}^+(g)$
and
$\hat{R}\sub{bN}^-(g)
  = \frac{1}{n\sub{bN}} \sum_{i=1}^{n\sub{bN}} \ell(-g(\bm{x}_i\super{bN}))$.
We therefore focus on the third term
$\bar{R}_{s=-1}^-(g) = (1-\pi-\rho)R_{s=-1}^-(g)$.
Our approach is mainly based on the following theorem.
We relegate all proofs to the Supplementary Material.

\begin{theorem}
  Let $\sigma(\bm{x}) = p(s=+1\mid\bm{x})$. For all $\eta\in[0, 1]$
  and $h: \mathbb{R}^d \rightarrow [0, 1]$ satisfying
  the condition $h(\bm{x})>\eta\Rightarrow\sigma(\bm{x})>0$,
  the risk $\bar{R}_{s=-1}^-(g)$ can be expressed as

  \vspace{0.8em}
  \resizebox{\linewidth}{!}{
  \begin{minipage}{\linewidth}
  \begin{align} \label{eq: decompose}
    \bar{R}_{s=-1}^-(g)
    &= \mathbb{E}_{\bm{x}\sim p(\bm{x})}[
        \mathds{1}_{h(\bm{x})\le\eta}
        \thinspace\ell(-g(\bm{x}))
        (1 - \sigma(\bm{x}))]\nonumber\\
    &\enspace
     + \pi\thinspace
     \mathbb{E}_{\bm{x}\sim p\sub{\emph{P}}(\bm{x})}\left[
        \mathds{1}_{h(\bm{x})>\eta}
        \thinspace\ell(-g(\bm{x}))
        \frac{1-\sigma(\bm{x})}{\sigma(\bm{x})}\right]\nonumber\\
    &\enspace
     + \rho\thinspace
     \mathbb{E}_{\bm{x}\sim p\sub{\emph{bN}}(\bm{x})}\left[
        \mathds{1}_{h(\bm{x})>\eta}
        \thinspace\ell(-g(\bm{x}))
        \frac{1-\sigma(\bm{x})}{\sigma(\bm{x})}\right].\nonumber\\
  \end{align}
  \end{minipage}}
\end{theorem}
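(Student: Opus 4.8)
The plan is to reduce the whole statement to two elementary identities about the joint law $p(\bm{x},y,s)$ followed by a single change of measure. First I would rewrite the prefactor. Since the constraint $p(s=+1\mid\bm{x},y=+1)=1$ is equivalent to $p(y=-1\mid\bm{x},s=-1)=1$, the event $\{s=-1\}$ is contained in $\{y=-1\}$, so $p(s=-1)=p(y=-1,s=-1)=p(y=-1)-p(y=-1,s=+1)=(1-\pi)-\rho$. Hence $1-\pi-\rho=p(s=-1)$, and $\bar{R}_{s=-1}^-(g)=p(s=-1)\,R_{s=-1}^-(g)$ is simply the integral of $\ell(-g(\bm{x}))$ against the sub-probability density $p(\bm{x},s=-1)$.

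Next I would express this integral as an expectation over the marginal $p(\bm{x})$. Writing $p(\bm{x},s=-1)=p(s=-1\mid\bm{x})\,p(\bm{x})=(1-\sigma(\bm{x}))\,p(\bm{x})$ yields $\bar{R}_{s=-1}^-(g)=\mathbb{E}_{\bm{x}\sim p(\bm{x})}[\ell(-g(\bm{x}))(1-\sigma(\bm{x}))]$. I would then insert the partition $1=\mathds{1}_{h(\bm{x})\le\eta}+\mathds{1}_{h(\bm{x})>\eta}$ inside this expectation. The $\mathds{1}_{h\le\eta}$ part matches the first line of the claimed decomposition verbatim, so all remaining work is to rewrite the $\mathds{1}_{h>\eta}$ part as the two $\mathrm{P}$- and $\mathrm{bN}$-expectations appearing on the second and third lines.

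For that I would use the key density identity $\sigma(\bm{x})\,p(\bm{x})=p(\bm{x},s=+1)=\pi\,p\sub{P}(\bm{x})+\rho\,p\sub{bN}(\bm{x})$, which follows by decomposing $\{s=+1\}$ into $\{s=+1,y=+1\}=\{y=+1\}$ (again by the constraint) and $\{s=+1,y=-1\}$, whose densities are $\pi p\sub{P}$ and $\rho p\sub{bN}$ respectively. Starting from the two target terms, I would combine them into the single integral $\int \mathds{1}_{h>\eta}\,\ell(-g(\bm{x}))\,\frac{1-\sigma(\bm{x})}{\sigma(\bm{x})}\,[\pi p\sub{P}(\bm{x})+\rho p\sub{bN}(\bm{x})]\,d\bm{x}$, substitute $\pi p\sub{P}+\rho p\sub{bN}=\sigma\,p$, and cancel one factor of $\sigma$ to recover $\int \mathds{1}_{h>\eta}\,\ell(-g(\bm{x}))\,(1-\sigma(\bm{x}))\,p(\bm{x})\,d\bm{x}$, which is exactly the $\mathds{1}_{h>\eta}$ piece from the previous step.

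The one point requiring care — and the place where the hypothesis on $h$ is actually used — is the cancellation of $\sigma(\bm{x})$: the factor $\frac{1-\sigma(\bm{x})}{\sigma(\bm{x})}$ is only meaningful where $\sigma(\bm{x})>0$. The assumption $h(\bm{x})>\eta\Rightarrow\sigma(\bm{x})>0$ guarantees that on the support of the indicator $\mathds{1}_{h>\eta}$ we have $\sigma(\bm{x})>0$, so the division is well defined and the multiply-then-cancel step is legitimate (note also that $\pi p\sub{P}+\rho p\sub{bN}$ vanishes wherever $\sigma=0$, so no mass is lost). I expect the only real obstacle to be stating this zero-set bookkeeping cleanly; the algebra itself is immediate once the two density identities are in hand.
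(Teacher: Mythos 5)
Your proposal is correct and takes essentially the same route as the paper's proof: both split the integral of $\ell(-g(\bm{x}))$ against $p(\bm{x},s=-1)$ by the indicator on $h(\bm{x})\le\eta$, reweight the $h\le\eta$ piece against $p(\bm{x})$ to get the factor $1-\sigma(\bm{x})$, reweight the $h>\eta$ piece against $p(\bm{x},s=+1)=\pi p\sub{P}(\bm{x})+\rho p\sub{bN}(\bm{x})$ to get $(1-\sigma(\bm{x}))/\sigma(\bm{x})$, and invoke the hypothesis $h(\bm{x})>\eta\Rightarrow\sigma(\bm{x})>0$ at exactly the same point. The only differences are presentational: you argue the $h>\eta$ part backwards from the target expression and spell out the identity $1-\pi-\rho=p(s=-1)$, which the paper states without derivation.
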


\vspace{-0.8em}
In the theorem, $\bar{R}_{s=-1}^-(g)$ is decomposed into three
terms, and when the expectation is substituted with the average
over training samples, these three terms are approximated
respectively using
data from $\mathcal{X}\sub{U}$, $\mathcal{X}\sub{P}$ and
$\mathcal{X}\sub{bN}$.
The choice of $h$ and $\eta$ is thus very crucial
because it determines what each of the three terms
tries to capture in practice.
Ideally, we would like $h$ to be an approximation of $\sigma$.
Then, for $\bm{x}$ such that $h(\bm{x})$ is close to 1, $\sigma(\bm{x})$
is close to 1, so the last two terms on the right-hand side of
the equation can be reasonably evaluated using $\mathcal{X}\sub{P}$
and $\mathcal{X}\sub{bN}$ (i.e., samples drawn from $p(\bm{x}\mid s=+1)$).
On the contrary, if $h(\bm{x})$ is small, $\sigma(\bm{x})$ is small
and such samples can be hardly found in $\mathcal{X}\sub{P}$
or $\mathcal{X}\sub{bN}$.
Consequently the first term appeared in the decomposition
is approximated with the help of $\mathcal{X}\sub{U}$.
Finally, in the empirical risk minimization paradigm, $\eta$ becomes
a hyperparameter that controls how important U data is against
P and bN data when we evaluate $\bar{R}_{s=-1}^-(g)$.
The larger $\eta$ is, the more attention we would pay to U data.

% When the expectation is replaced by the average
% over training samples, $\eta$ controls whether $\bar{R}_{s=-1}^-(g)$
% is principally approximated using samples drawn from $p(\bm{x})$
% or $p(\bm{x}\mid s=+1)$.
% In practice, we would like $h$ to be an approximation of $\sigma$.
% In the theorem, we can think that that $\bar{R}_{s=-1}^-(g)$ is
% decomposed into two parts (the first term vs. last two terms
% on the right-hand side of the equation)
% according to the value of $h(\bm{x})$.
% For $\bm{x}$ such that $h(\bm{x})$ is close to 1, $\sigma(\bm{x})$
% should also be close to 1 so we can
% simply evaluate this part of the risk using $\mathcal{X}\sub{P}$
% and $\mathcal{X}\sub{bN}$.
% On the other hand, if $h(\bm{x})$ is small, $\sigma(\bm{x})$ is small
% and such samples can be hardly found in $\mathcal{X}\sub{P}$
% or $\mathcal{X}\sub{bN}$.
% Consequently we need to approximate the other part of the
% risk with the help of $\mathcal{X}\sub{U}$.

One may be curious about why we do not simply approximate the whole risk
using only U samples, that is, set $\eta$ to 1.
There are two main reasons.
On one hand, if we have a very small U set, which
means $n\sub{U} \ll n\sub{P}$ and $n\sub{U} \ll n\sub{bN}$, approximating
a part of the risk with labeled samples should help us reduce the
estimation
error. This may seem unrealistic but sometimes unbiased U samples
can also be difficult to collect \citep{ishida2018binary}.
On the other hand, more importantly, we have empirically observed that
when the model of $g$ is highly flexible, 
even a sample regarded as N with small weight
gets classified as N in the latter stage of
training and performance of the resulting classifier can thus
be severely degraded.
Introducing $\eta$ alleviates this problem by avoiding treating
all U data as N samples.

As $\sigma$ is not available in reality, we propose to replace
$\sigma$ by its estimate $\hat{\sigma}$ in
\eqref{eq: decompose}.
We further substitute $h$ with the same estimate and obtain the following
expression:
  
\vspace{0.7em}
\resizebox{\linewidth}{!}{
\begin{minipage}{\linewidth}
\begin{align*}
  \bar{R}_{s=-1, \eta, \hat{\sigma}}^-(g)
  &= \mathbb{E}_{\bm{x}\sim p(\bm{x})}[
      \mathds{1}_{\hat{\sigma}(\bm{x})\le\eta}
      \thinspace\ell(-g(\bm{x}))
      (1 - \hat{\sigma}(\bm{x}))]\\
  &\enspace
   + \pi\thinspace
    \mathbb{E}_{\bm{x}\sim p\sub{P}(\bm{x})}\left[
        \mathds{1}_{\hat{\sigma}(\bm{x})>\eta}
      \thinspace\ell(-g(\bm{x}))
      \frac{1-\hat{\sigma}(\bm{x})}{\hat{\sigma}(\bm{x})}\right]\\
  &\enspace
   + \rho\thinspace
    \mathbb{E}_{\bm{x}\sim p\sub{bN}(\bm{x})}\left[
      \mathds{1}_{\hat{\sigma}(\bm{x})>\eta}
      \thinspace\ell(-g(\bm{x}))
      \frac{1-\hat{\sigma}(\bm{x})}{\hat{\sigma}(\bm{x})}\right].
\end{align*}
\end{minipage}}
\vspace{0.2em}

We notice that $\bar{R}_{s=-1,\eta,\hat{\sigma}}$ depends both on
$\eta$ and $\hat{\sigma}$. It can be directly approximated
from data by

\vspace{0.7em}
\resizebox{\linewidth}{!}{
\begin{minipage}{\linewidth}
\begin{align*}
  \hat{\bar{R}}_{s=-1,\eta,\hat{\sigma}}(g)
  &= 
  \frac{1}{n\sub{U}}\sum_{i=1}^{n\sub{U}}
  \left[
    \mathds{1}_{\hat{\sigma}(\bm{x}_i\super{U})\le\eta}
    \thinspace\ell(-g(\bm{x}_i\super{U}))
    (1-\hat{\sigma}(\bm{x}_i\super{U}))
  \right]\\
  &\enspace
  + \frac{\pi}{n\sub{P}}\sum_{i=1}^{n\sub{P}}
    \left[
      \mathds{1}_{\hat{\sigma}(\bm{x}_i\super{P})>\eta}
      \thinspace\ell(-g(\bm{x}_i\super{P}))
      \frac{1-\hat{\sigma}(\bm{x}_i\super{P})}
      {\hat{\sigma}(\bm{x}_i\super{P})}
    \right]\\
  &\enspace
  + \frac{\rho}{n\sub{bN}}\sum_{i=1}^{n\sub{bN}}
    \left[
      \mathds{1}_{\hat{\sigma}(\bm{x}_i\super{bN})>\eta}
      \thinspace\ell(-g(\bm{x}_i\super{bN}))
      \frac{1-\hat{\sigma}(\bm{x}_i\super{bN})}
          {\hat{\sigma}(\bm{x}_i\super{bN})}
    \right].
\end{align*}
\end{minipage}
}
\vspace{0.2em}

We are now able to derive the empirical version of
Equation \eqref{eq: PUbN risk} as

\begin{equation} \label{eq: PUbN risk estimator}
  \hat{R}_{\textrm{PUbN},\eta,\hat{\sigma}}(g) = 
  \pi\hat{R}\sub{P}^+(g) + \rho\hat{R}\sub{bN}^-(g) 
  + \hat{\bar{R}}_{s=-1, \eta, \hat{\sigma}}^-(g).
\end{equation}

\subsection{Practical Implementation}

To complete our algorithm, we need to be able to
estimate $\sigma$ and find appropriate $\eta$.
Given that the value of $\eta$ can be hard to tune,
we introduce another intermediate hyperparameter $\tau$
and choose $\eta$ such that
$\#\{x\in\mathcal{X}\sub{U} \mid \hat{\sigma}(x)\le\eta\}
= \lfloor\tau(1-\pi-\rho)n\sub{U}\rfloor$,
where $\lfloor\cdot\rfloor$ is the floor function.
The number $\tau(1-\pi-\rho)$ is then the portion of unlabeled
samples that are involved in the second step of our algorithm.
Intuitively, we can set a higher $\tau$ and include more
U samples in the minimization of \eqref{eq: PUbN risk estimator}
when we have a good estimate $\hat{\sigma}$
and otherwise we should prefer a smaller $\tau$ to reduce
the negative effect that can be caused by the use of
$\hat{\sigma}$ of poor quality.
The use of validation data to select the final $\tau$ should
also be prioritized as what we do in the experimental part.

\textbf{Estimating $\mathbf{\sigma}$.}\enspace
If we regard $s$ as a class label, the problem of
estimating $\sigma$ is then equivalent to
training a probabilistic classifier separating the
classes with $s=+1$ and $s=-1$.
Upon noting that
$(\pi+\rho)\mathbb{E}_{\bm{x}\sim p(\bm{x}\mid s=+1)}[\ell(\epsilon g(x))]
= \pi\mathbb{E}_{\bm{x}\sim p\sub{P}(\bm{x})}[\ell(\epsilon g(x))]
+ \rho\mathbb{E}_{\bm{x}\sim p\sub{bN}(\bm{x})}[\ell(\epsilon g(x))]$
for $\epsilon\in\{+1, -1\}$, it is straightforward to apply nnPU
learning with availability of $\mathcal{X}\sub{P}$, $\mathcal{X}\sub{bN}$ and
$\mathcal{X}\sub{U}$ to minimize
$\mathbb{E}_{(\bm{x}, s)\sim p(\bm{x},s)}[\ell(sg(\bm{x}))]$.
In other words, here we regard $\mathcal{X}\sub{P}$ and
$\mathcal{X}\sub{bN}$ as P
and $\mathcal{X}\sub{U}$ as U, and attempt to solve a PU learning problem
by applying nnPU\@.
Since we are interested in the class-posterior probabilities, we minimize
the risk with respect to the
logistic loss and apply the sigmoid function to the output of
the model to get $\hat{\sigma}(\bm{x})$.
However, the above risk estimator accepts any reasonable $\hat{\sigma}$
and we are not limited to using nnPU for computing $\hat{\sigma}$.
For example, the least-squares fitting approach proposed by
\citet{Kanamori2009ALA} for direct density ratio estimation can also
be adapted to solving the problem.

To handle large datasets, it is preferable to adopt
stochastic optimization algorithms to minimize 
$\hat{R}_{\textrm{PUbN},\eta,\hat{\sigma}}(g)$.

{\setlength{\textfloatsep}{8pt}
\begin{algorithm}[tb]
   \caption{PUbN Classification}
\begin{algorithmic}[1]
  \STATE {\bfseries Input:} data
    $(\mathcal{X}\sub{P}, \mathcal{X}\sub{bN}, \mathcal{X}\sub{U})$,
    hyperparameter $\tau$
  \STATE {\bfseries Step 1:}
    \STATE Compute $\hat{\sigma}$ by minimizing an nnPU risk
    involving $\mathcal{X}\sub{P}$, $\mathcal{X}\sub{bN}$
    as P data and $\mathcal{X}\sub{U}$ as U data
  \STATE {\bfseries Step 2:}
    \STATE Initialize model parameter $\theta$ of $g$
    \STATE Choose $\mathcal{A}$ a SGD-like stochastic optimization
    algorithm \hspace{-1em}
    \STATE Set $\eta$ such that

    \[\#\{x\in\mathcal{X}\sub{U} \mid \hat{\sigma}(x)\le\eta\}
    = \lfloor\tau(1-\pi-\rho)n\sub{U}\rfloor\]

    \FOR{$i=1 \ldots$}
    \STATE Shuffle
    $(\mathcal{X}\sub{P}, \mathcal{X}\sub{bN}, \mathcal{X}\sub{U})$
    into M mini-batches
    \FOR{each mini-batch
    $(\mathcal{X}\sub{P}^j, \mathcal{X}\sub{bN}^j, \mathcal{X}\sub{U}^j)$}
    \STATE Compute the corresponding
    $\hat{R}_{\textrm{PUbN},\eta,\hat{\sigma}}(g)$
    \STATE Use $\mathcal{A}$ to update $\theta$ with the gradient information
    $\nabla_{\theta}\hat{R}_{\textrm{PUbN},\eta,\hat{\sigma}}(g)$
    \ENDFOR
    \ENDFOR
    \STATE {\bfseries Return:} $\theta$ minimizing the validation loss
\end{algorithmic}
\end{algorithm}}

\subsection{Estimation Error Bound}

Here we establish an estimation error bound for the proposed method.
Let $\mathcal{G}$ be the function class from which we find a
function. The Rademacher complexity
of $\mathcal{G}$ for the samples of size $n$ drawn
from $q(\bm{x})$ is defined as

\[\mathfrak{R}_{n,q}(\mathcal{G})
  = \mathbb{E}_{\mathcal{X}\sim q^n}
    \mathbb{E}_{\xi}\left[\sup_{g\in\mathcal{G}}
    \frac{1}{n}\sum_{x_i\in\mathcal{X}}\xi_i g(\bm{x}_i)\right],\]

where $\mathcal{X}=\{\bm{x}_1, \ldots, \bm{x}_n\}$ and
$\xi=\{\xi_1, \ldots, \xi_n\}$ with each $\bm{x}_i$
drawn from $q(\bm{x})$ and $\xi_i$ as a Rademacher variable
\citep{mohri2012foundations}.
In the following we will assume that $\mathfrak{R}_{n,q}(\mathcal{G})$
vanishes asymptotically as $n\rightarrow\infty$.
This holds for most of the common choices of
$\mathcal{G}$ if proper regularization is considered
\citep{bartlett2002rademacher, golowich2018size}.
Assume additionally the existence of $C_{g} > 0$
such that $\sup_{g\in\mathcal{G}}\|g\|_{\infty}\le C_g$
as well as $C_{\ell} > 0$ such that
$\sup_{|z|\le C_g}\ell(z)\le C_{\ell}$.
We also assume that $\ell$ is Lipschitz continuous
on the interval $[-C_g, C_g]$ with a Lipschitz constant $L_{\ell}$.

% The risk estimator \eqref{eq: PUbN risk estimator} is derived
% based on two approximations. We first approximate the true posterior
% $\sigma$ by its estimate $\hat{\sigma}$ and as a second step
% we substitute expectations with averages over training samples.
% It is obvious that the two factors should be involved in the bound.

\begin{theorem}\label{th: bound}
  Let $g^*=\argmin_{g\in\mathcal{G}}R(g)$ be the true risk minimizer
  and
  $\hat{g}_{\emph{PUbN}, \eta, \hat{\sigma}}
  = \argmin_{g\in\mathcal{G}}\hat{R}_{\emph{PUbN}, \eta, \hat{\sigma}}(g)
  $ be the PUbN empirical risk minimizer.
  We suppose that $\hat{\sigma}$ is a fixed function independent
  of data used to compute $\hat{R}_{\emph{PUbN}, \eta, \hat{\sigma}}(g)$
  and $\eta\in(0, 1]$.
  %$\hat{\sigma}(\bm{x})>\eta\Rightarrow\sigma(\bm{x})>0$.
  %Denote by $p\sub{\emph{P}}(\bm{x})=p(\bm{x}\mid y=+1)$ and
  %$p\sub{\emph{bN}}(\bm{x})=p(\bm{x}\mid y=-1, s=+1)$
  %the P and bN marginals.
  Let $\zeta = p(\hat{\sigma}(\bm{x})\le\eta)$ and
  $\epsilon = \mathbb{E}_{\bm{x}\sim p(\bm{x})}
    [|\hat{\sigma}(\bm{x})-\sigma(\bm{x})|^2]$.
  Then for any $\delta>0$, with probability at least $1-\delta$,

\vspace{0.7em}
\resizebox{\linewidth}{!}{
\begin{minipage}{\linewidth}
\begin{align*}
  & R(\hat{g}_{\emph{PUbN}, \eta, \hat{\sigma}}) - R(g^*) \\
  &\enspace\le
  4 L_\ell \mathfrak{R}_{n\sub{\emph{U}}, p}(\mathcal{G})
  + \frac{4\pi L_\ell}{\eta}
    \mathfrak{R}_{n\sub{\emph{P}}, p\sub{\emph{P}}}(\mathcal{G})
  + \frac{4\rho L_\ell}{\eta}
    \mathfrak{R}_{n\sub{\emph{bN}},p\sub{\emph{bN}}}(\mathcal{G}) \\
  &\quad
  + 2C_\ell\sqrt{\frac{\ln(6/\delta)}{2n\sub{\emph{U}}}}
  + \frac{2\pi C_\ell}{\eta}
    \sqrt{\frac{\ln(6/\delta)}{2n\sub{\emph{P}}}}
  + \frac{2\rho C_\ell}{\eta}
    \sqrt{\frac{\ln(6/\delta)}{2n\sub{\emph{bN}}}}\\
  &\quad
  + 2C_\ell\sqrt{\zeta\epsilon}
  + \frac{2C_\ell}{\eta}\sqrt{(1-\zeta)\epsilon}.
\end{align*}
\end{minipage}}
\end{theorem}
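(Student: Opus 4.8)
The plan is to reduce the estimation error to a uniform deviation and then control that deviation in two pieces: a deterministic \emph{bias} caused by replacing $\sigma$ with $\hat\sigma$, and a \emph{statistical} fluctuation controlled by Rademacher complexities. Writing $\hat g = \hat g_{\textrm{PUbN},\eta,\hat\sigma}$ and $\hat R = \hat R_{\textrm{PUbN},\eta,\hat\sigma}$, the optimality $\hat R(\hat g)\le\hat R(g^*)$ gives the textbook split
\[ R(\hat g)-R(g^*) \le [R(\hat g)-\hat R(\hat g)] + [\hat R(g^*)-R(g^*)] \le \sup_{g\in\mathcal G}(R(g)-\hat R(g)) + \sup_{g\in\mathcal G}(\hat R(g)-R(g)). \]
I would then introduce the population risk that uses $\hat\sigma$ in place of $\sigma$, namely $R_{\eta,\hat\sigma}(g) = \pi R\sub{P}^+(g)+\rho R\sub{bN}^-(g)+\bar R_{s=-1,\eta,\hat\sigma}^-(g)$, which has $\hat R$ as its exact empirical counterpart, and write $R-\hat R = (R-R_{\eta,\hat\sigma})+(R_{\eta,\hat\sigma}-\hat R)$.

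For the bias term $R-R_{\eta,\hat\sigma} = \bar R_{s=-1}^- - \bar R_{s=-1,\eta,\hat\sigma}^-$, the key identities are $\pi p\sub{P}(\bm x)+\rho p\sub{bN}(\bm x)=\sigma(\bm x)p(\bm x)$ and $(1-\sigma(\bm x))p(\bm x)=p(\bm x,s=-1)$, both immediate from the problem setup. Using them I would rewrite both risks as expectations under $p$: the true term becomes $\bar R_{s=-1}^-(g)=\mathbb E_{p}[(1-\sigma)\ell(-g)]$, while folding $\pi p\sub{P}+\rho p\sub{bN}=\sigma p$ into the last two terms of $\bar R_{s=-1,\eta,\hat\sigma}^-$ turns it into $\mathbb E_p[\mathds 1_{\hat\sigma\le\eta}(1-\hat\sigma)\ell(-g)] + \mathbb E_p[\mathds 1_{\hat\sigma>\eta}\tfrac{\sigma(1-\hat\sigma)}{\hat\sigma}\ell(-g)]$. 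Subtracting, the integrand difference simplifies to $\hat\sigma-\sigma$ on $\{\hat\sigma\le\eta\}$ and to $\tfrac{\hat\sigma-\sigma}{\hat\sigma}$ on $\{\hat\sigma>\eta\}$; bounding $\ell$ by $C_\ell$, using $1/\hat\sigma<1/\eta$ on the second region, and applying Cauchy--Schwarz against $\mathds 1_{\hat\sigma\le\eta}$ and $\mathds 1_{\hat\sigma>\eta}$ yields $|R-R_{\eta,\hat\sigma}|\le C_\ell\sqrt{\zeta\epsilon}+\tfrac{C_\ell}{\eta}\sqrt{(1-\zeta)\epsilon}$, which after the two brackets is exactly the last line of the stated bound.

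For the statistical term I would group $R_{\eta,\hat\sigma}-\hat R$ by data source, merging $\pi R\sub{P}^+$ with the P-part of $\bar R_{s=-1,\eta,\hat\sigma}^-$ and $\rho R\sub{bN}^-$ with its bN-part, leaving one U-term. The crucial algebraic observation is that on $\{\hat\sigma>\eta\}$ one has $1+\tfrac{1-\hat\sigma}{\hat\sigma}=\tfrac1{\hat\sigma}<\tfrac1\eta$, so the merged P-integrand $\pi(\ell(g)+\mathds 1_{\hat\sigma>\eta}\tfrac{1-\hat\sigma}{\hat\sigma}\ell(-g))$ is bounded by $\tfrac{\pi C_\ell}{\eta}$ and is $\tfrac{\pi L_\ell}{\eta}$-Lipschitz in $g$ per example, the merged bN-integrand likewise bounded by $\tfrac{\rho C_\ell}{\eta}$ and $\tfrac{\rho L_\ell}{\eta}$-Lipschitz, and the U-integrand bounded by $C_\ell$ and $L_\ell$-Lipschitz. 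For each source and each sign I would apply the standard symmetrization bound on $\mathbb E[\sup_g(\pm(\mathbb E f_g-\hat{\mathbb E}f_g))]\le 2\mathfrak R(\mathcal F)$ followed by McDiarmid's inequality with bounded differences equal to the range over the sample size, then Talagrand's contraction lemma to pass from the loss-composed class to $\mathcal G$ with the Lipschitz constants above. Summing the two signs gives the factor-$4$ Rademacher terms $4L_\ell\mathfrak R_{n\sub U,p}(\mathcal G)$, $\tfrac{4\pi L_\ell}{\eta}\mathfrak R_{n\sub P,p\sub P}(\mathcal G)$, $\tfrac{4\rho L_\ell}{\eta}\mathfrak R_{n\sub{bN},p\sub{bN}}(\mathcal G)$ and the concentration terms with ranges $C_\ell,\tfrac{\pi C_\ell}{\eta},\tfrac{\rho C_\ell}{\eta}$; a union bound over the $3\times2=6$ one-sided events of probability $\delta/6$ each produces the $\ln(6/\delta)$ and the overall confidence $1-\delta$.

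I expect the main obstacle to lie in the two places where $\pi p\sub{P}+\rho p\sub{bN}=\sigma p$ does the real work: using it to change measure in the bias term so that the approximation error $\epsilon$ — defined as an expectation under $p$ — actually appears (bounding the P- and bN-expectations directly, without recombining, would not reproduce $\epsilon$ or the factor $1/\eta$), and using $1+\tfrac{1-\hat\sigma}{\hat\sigma}=\tfrac1{\hat\sigma}$ to obtain the tight $1/\eta$ scaling of the P/bN ranges instead of a looser $1+1/\eta$. A secondary care point is applying the contraction lemma with per-example weights rather than a single uniform constant; since each weight is bounded by $1/\eta$ this is harmless, but it must be invoked in its per-coordinate form.
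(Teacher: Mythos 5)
Your proposal is correct and follows essentially the same route as the paper's own proof: you introduce the identical intermediate population risk $R_{\textrm{PUbN},\eta,\hat\sigma}$, bound the bias term exactly as in the paper's Lemma 2 (change of measure via $\pi p\sub{P}+\rho p\sub{bN}=\sigma p$, integrand differences $\hat\sigma-\sigma$ and $(\hat\sigma-\sigma)/\hat\sigma$, Cauchy--Schwarz), and handle the statistical term exactly as in Lemma 1 (merging by data source, ranges $C_\ell/\eta$, per-coordinate Talagrand contraction with Lipschitz constant $L_\ell/\eta$, and a union bound yielding $\ln(6/\delta)$), before assembling with the standard factor-2 argument. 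The only differences are cosmetic, e.g.\ splitting into six one-sided events of probability $\delta/6$ rather than three two-sided events of probability $\delta/3$.
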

\vspace{-0.6em}

Theorem \ref{th: bound} combined with
the Borel-Cantelli lemma implies that as
$n\sub{P}\rightarrow\infty$,
$n\sub{bN}\rightarrow\infty$ and $n\sub{U}\rightarrow\infty$,
the inequality
$\limsup
R(\hat{g}_{\textrm{PUbN}, \eta, \hat{\sigma}}) - R(g^*)
\le 
%B \overset{\textrm{def}}{=}
2C_\ell\sqrt{\zeta\epsilon}
+ 2(C_\ell/\eta)\sqrt{(1-\zeta)\epsilon}$ holds almost surely.
%$R(\hat{g}_{\textrm{PUbN}, \eta, \hat{\sigma}}) - R(g^*)
%\rightarrow
%2C_\ell\sqrt{\zeta\epsilon}
%+ 2(C_\ell/\eta)\sqrt{(1-\zeta)\epsilon}$.
Furthermore, if there is $C_{\mathcal{G}}>0$ such that
$\mathfrak{R}_{n,q}(\mathcal{G})\le C_{\mathcal{G}}/\sqrt{n}$
\footnote{
  For instance, this holds for linear-in-parameter model class
  $\mathcal{F} = \{f(\bm{x})=\bm{w}^{\top}\phi(\bm{x})\mid
    \|\bm{w}\|\le C_{\bm{w}}, \|\phi\|_{\infty}\le C_{\phi}\}$,
  where $C_{\bm{w}}$ and $C_{\phi}$ are positive constants
  \citep{mohri2012foundations}.
},
the convergence of
$[(R(\hat{g}_{\textrm{PUbN}, \eta, \hat{\sigma}}) - R(g^*))
-
(2C_\ell\sqrt{\zeta\epsilon}
+ 2(C_\ell/\eta)\sqrt{(1-\zeta)\epsilon})]^+$ to $0$ is in
$\mathcal{O}_p(1/\sqrt{n\sub{P}}+1/\sqrt{n\sub{bN}}+1/\sqrt{n\sub{U}})$,
where $\mathcal{O}_p$ denotes the order in probability and
$[\cdot]^+ = \max\{0, \cdot\}$.
As for $\epsilon$, knowing that $\hat{\sigma}$ is also estimated
from data in practice
\footnote{
  These data, according to theorem \ref{th: bound}, must be different
  from those used to evaluate
  $\hat{R}_{\textrm{PUbN}, \eta, \hat{\sigma}}(g)$.
  This condition is however violated in most of our
  experiments. See Supplementary Material \ref*{app: sig sep}
  for more discussion.
}, apparently its value depends
on both the estimation algorithm and the number of samples
that are involved in the estimation process.
For example, in our approach we applied nnPU with the logistic loss
to obtain $\hat{\sigma}$, so the excess risk can be written as
$\mathbb{E}_{\bm{x}\sim p(\bm{x})}
  \mathrm{KL}(\sigma(\bm{x})\|\hat{\sigma}(\bm{x}))$,
where by abuse of notation
$\mathrm{KL}(p\|q) = p\ln(p/q) + (1-p)\ln((1-p)/(1-q))$
denotes the KL divergence between two Bernouilli
distributions with parameters respectively $p$ and $q$.
It is known that 
$\epsilon = \mathbb{E}_{\bm{x}\sim p(\bm{x})}
  [|\hat{\sigma}(\bm{x})-\sigma(\bm{x})|^2]
\le(1/2)\mathbb{E}_{\bm{x}\sim p(\bm{x})}
  \mathrm{KL}(\sigma(\bm{x})\|\hat{\sigma}(\bm{x}))$
\citep{zhang2004statistical}.
The excess risk itself can be decomposed into the sum of
the estimation error and the approximation error.
\citet{kiryo2017positive} showed that under mild assumptions
the estimation error part converges to zero when the
sample size increases to infinity in nnPU learning.
It is however impossible to get rid of the approximation error
part which is fixed once we fix the function class $\mathcal{G}$.
To circumvent this problem, we can either resort to
kernel-based methods with universal kernels
\citep{zhang2004statistical} or simply
enlarge the function class when we get more samples.

\subsection{PU Learning Revisited}

In PU learning scenarios, we only have P and U data and
bN data are not available.
Nevertheless, if we let $y$ play the role of $s$ and ignore all the
terms related to bN data, our algorithm is naturally applicable
to PU learning.
Let us name the resulting algorithm PUbN\textbackslash N, then

\[\hat{R}_{\textrm{PUbN\textbackslash N},\eta,\hat{\sigma}}(g) = 
  \pi\hat{R}\sub{P}^+(g) + \hat{\bar{R}}_{y=-1, \eta, \hat{\sigma}}^-(g),\]

where $\hat{\sigma}$ is an estimate of $p(y=+1\mid\bm{x})$ and

\vspace{0.8em}
\resizebox{\linewidth}{!}{
\begin{minipage}{\linewidth}
\begin{align*}
  \bar{R}_{y=-1, \eta, \hat{\sigma}}^-(g)
  &= \mathbb{E}_{\bm{x}\sim p(\bm{x})}[
      \mathds{1}_{\hat{\sigma}(\bm{x})\le\eta}
      \thinspace\ell(-g(\bm{x}))
      (1 - \hat{\sigma}(\bm{x}))]\\
  &\enspace + \pi\thinspace
  \mathbb{E}_{\bm{x}\sim p\sub{P}(\bm{x})}\left[
        \mathds{1}_{\hat{\sigma}(\bm{x})>\eta}
      \thinspace\ell(-g(\bm{x}))
      \frac{1-\hat{\sigma}(\bm{x})}{\hat{\sigma}(\bm{x})}\right].
\end{align*}
\end{minipage}}
\vspace{0.2em}

PUbN\textbackslash N can be viewed as a variant of the traditional
two-step approach in PU learning
which first identifies possible N data in U data and
then perform ordinary PN classification to distinguish P data from
the identified N data.
However, being based on state-of-the-art nnPU learning,
our method is more promising than other similar algorithms.
Moreover, by explicitly considering the posterior
$p(y=+1\mid\bm{x})$, we attempt to correct the bias induced
by the fact of only taking into account confident negative samples.
The benefit of using an unbiased risk estimator is that
the resulting algorithm is always statistically
consistent, i.e., the estimation error converges in probability
to zero as the number of samples grows to infinity.

\section{Experiments} \label{sec: exps}

In this section, we experimentally investigate the proposed method
and compare its performance against several baseline methods.

\subsection{Basic Setup}

We focus on training neural networks with stochastic optimization.
For simplicity, in an experiment, $\hat{\sigma}$ and $g$ always
use the same model and are trained for the same number of epochs.
All models are learned using AMSGrad \citep{j.2018on} as the
optimizer and the logistic loss as the surrogate loss
unless otherwise specified.
% The sigmoid loss employed in \cite{kiryo2017positive}
% is generally hard to optimize and it can be extremely difficult to
% find the proper hyperparameters for which training proceeds.
% Therefore it should be more beneficial to use the logistic
% loss in general.
% To determine the value of $\eta$, we introduce another hyperparameter
% $\tau$ and choose $\eta$ such that
% $\#\{x\in\mathcal{X}\sub{U} \mid \hat{\sigma}(x)\le\eta\}
% = \tau(1-\pi-\rho)n\sub{U}$.
In all the experiments, an additional validation set,
equally composed of P, U and bN data,
is sampled for both hyperparameter tuning and choosing the model
parameters with the lowest validation loss among
those obtained after every epoch.
Regarding the computation of the validation loss,
we use the PU risk estimator
\eqref{eq: pu risk estimator} with the sigmoid loss for $g$
and an empirical approximation of
$
  \mathbb{E}_{\bm{x}\sim p(\bm{x})}
  [|\hat{\sigma}(\bm{x})-\sigma(\bm{x})|^2] 
  - \mathbb{E}_{\bm{x}\sim p(\bm{x})}[\sigma(\bm{x})^2]
$
for $\hat{\sigma}$
(see Supplementary Material \ref*{app: val sig}).

\subsection{Effectiveness of the Algorithm} \label{subsec: effectiveness}

We assess the performance of the proposed method on three benchmark
datasets: MNIST, CIFAR-10 and 20 Newsgroups.
Experimental details are given in Supplementary Material \ref*{app: deexp}.
To recapitulate, for the three datasets we respectively use
a 4-layer ConvNet, PreAct ResNet-18 \citep{he2016identity}
and a 3-layer fully connected neural network.
On 20 Newsgroups text features are generated thanks to
the use of ELMo word embedding \citep{Peters:2018}.
Since all the three datasets are originally
designed for multiclass classification, we group different
categories together to form a binary classification problem.

\textbf{Baselines.}\enspace
When $\mathcal{X}\sub{bN}$ is given, two baseline methods are considered.
The first one is nnPNU adapted from \eqref{eq: pnu risk estimator}.
In the second method, named as PU$\rightarrow$PN, we train
two binary classifiers: one is learned with nnPU
while we regard $s$ as the class label,
and the other is learned from $\mathcal{X}\sub{P}$ and $\mathcal{X}\sub{bN}$
to separate P samples from bN samples.
A sample is classified in the P class only if it is so classified
by the two classifiers.
When $\mathcal{X}\sub{bN}$ is not available, nnPU is compared
with the proposed PUbN\textbackslash N.

\textbf{Sampling bN Data.}\enspace
To sample $\mathcal{X}\sub{bN}$, we suppose that the bias of
N data is caused by a latent prior probability change
\citep{sugiyama2007mixture, hu2018does} in the N class.
Let $z\in\mathcal{Z} = \{1, \ldots, S\}$ be some latent variable
which we call a latent category, where $S$ is a constant.
It is assumed

\begin{align*}
  p(\bm{x}\mid z, y=-1) &= p(\bm{x}\mid z, y=-1, s=+1),\\
  p(z\mid y=-1) &\neq p(z\mid y=-1, s=+1).
\end{align*}

\begin{table*}[t]
  \caption{Mean and standard deviation of misclassification rates
    over 10 trials for MNIST, CIFAR-10 and 20 Newsgroups
    under different choices of P class and bN data sampling
    strategies.
    For a same learning task, different methods
    are compared using the same 10 random samplings.
    Underlines denote that with the use of bN data
    the method leads to an improvement of performance according to
    the 5\% t-test.
    Boldface indicates the best method in each task.\\[0.2em]
    $^\dagger$
    Biased N data uniformly sampled from the indicated latent categories.\\
    $^\star$
    Probabilities that a sample of $\mathcal{X}\sub{bN}$ belongs to
    the latent categories [1, 3, 5, 7, 9] /
    [bird, cat, deer, dog, frog, horse] / [sci., soc., talk.]
    are [0.03, 0.15, 0.3, 0.02, 0.5] /
    [0.1, 0.02, 0.2, 0.08, 0.2, 0.4] / [0.1, 0.5, 0.4].%
  }
  \label{tab: effectiveness}
  \begin{center}
    \begin{tabular}{llm{2.7cm}lrrr}
      \toprule
      Dataset & P & biased N & $\rho$
      & nnPU/nnPNU
      & PUbN(\textbackslash N) & PU$\rightarrow$PN \\
      \midrule
      \multirow{3}{*}{MNIST}
      & \multirow{3}{*}{0, 2, 4, 6, 8}
      & Not given & NA
      & $5.76 \pm 1.04$ & $\mathbf{4.64 \pm 0.62}$ & NA \\
      && 1, 3, 5 $^\dagger$ & 0.3
      & $5.33 \pm 0.97$ & \underline{$4.05 \pm 0.27$}
      & \underline{$\mathbf{4.00 \pm 0.30}$} \\
      && 9 $>$ 5 $>$ others $^\star$ & 0.2
      & $4.60 \pm 0.65$ & \underline{$3.91 \pm 0.66$}
      & \underline{$\mathbf{3.77 \pm 0.31}$} \\[0.2em]
      \cmidrule(l{2pt}r{2pt}){1-7} \\[-0.8em]
      \multirow{3}{*}[-0.6em]{CIFAR-10}
      & \multirow{3}{2.5cm}[-0.6em]{Airplane, automobile, ship, truck}
      & Not given & NA
      & $12.02 \pm 0.65$ & $\mathbf{10.70 \pm 0.57}$ & NA \\
      && Cat, dog, horse $^\dagger$ & 0.3
      & $10.25 \pm 0.38$
      & \underline{$\mathbf{9.71 \pm 0.51}$} & $10.37 \pm 0.65$ \\[0.2em]
      && \shortstack[l]{Horse $>$ deer \\= frog $>$ others} $^\star$ & 0.25
      & \underline{$9.98 \pm 0.53$}
      & \underline{$\mathbf{9.92 \pm 0.42}$}
      & \underline{$10.17 \pm 0.35$} \\[0.2em]
      \cmidrule(l{2pt}r{2pt}){1-7} \\[-0.8em]
      \multirow{3}{*}{CIFAR-10}
      & \multirow{3}{2.5cm}{Cat, deer, dog, horse}
      & Not given & NA
      & $23.78 \pm 1.04$ & $\mathbf{21.13 \pm 0.90}$ & NA \\
      && Bird, frog $^\dagger$ & 0.2
      & $22.00 \pm 0.53$
      & \underline{$\mathbf{18.83 \pm 0.71}$}
      & \underline{$19.88 \pm 0.62$} \\
      && Car, truck $^\dagger$ & 0.2
      & $22.00 \pm 0.74$
      & \underline{$\mathbf{20.19 \pm 1.06}$} & $21.83 \pm 1.36$ \\[0.2em]
      \cmidrule(l{2pt}r{2pt}){1-7} \\[-0.8em]
      \multirow{4}{*}{20 Newsgroups}
      & \multirow{4}{2.5cm}{alt., comp., misc., rec.}
      & Not given & NA
      & $14.67 \pm 0.87$ & $\mathbf{13.30 \pm 0.53}$ & NA \\
      && sci.$^\dagger$ & 0.21
      & $14.69 \pm 0.46$
      & $\mathbf{13.10 \pm 0.90}$
      & $13.58 \pm 0.97$ \\
      && talk.$^\dagger$ & 0.17
      & $14.38 \pm 0.74$
      & \underline{$\mathbf{12.61 \pm 0.75}$}
      & $13.76 \pm 0.66$ \\
      && soc. $>$ talk. $>$ sci.$^\star$ & 0.1
      & $14.41 \pm 0.76$
      & \underline{$\mathbf{12.18 \pm 0.59}$}
      & $12.92 \pm 0.51$ \\
      \bottomrule
    \end{tabular}
  \end{center}
\end{table*}

In the experiments, the latent categories are the original
class labels of the datasets.
Concrete definitions of $\mathcal{X}\sub{bN}$
with experimental results are summarized
in \autoref{tab: effectiveness}.

\textbf{Results.}\enspace
Overall, our proposed method consistently achieves the best or
comparable performance in all the scenarios,
including those of standard PU learning.
Additionally, using bN data can effectively help improving
the classification performance.
However, the choice of algorithm is essential.
Both nnPNU and the naive PU$\rightarrow$PN are able to leverage
bN data to enhance classification accuracy in
only relatively few tasks.
In the contrast,
the proposed PUbN successfully reduce the misclassification
error most of the time.

Clearly, the performance gain that we can benefit from
the availability of bN data is case-dependent.
On CIFAR-10, the greatest improvement is achieved when we regard
mammals (i.e. cat, deer, dog and horse)
as P class and drawn samples from latent categories
bird and frog as labeled negative data.
This is not surprising because birds and frogs are more
similar to mammals than vehicles, which makes the
classification harder specifically for samples from
these two latent categories.
By explicitly labeling these samples as N data, we 
allow the classifier to make better predictions for these
difficult samples.

\subsection{Illustration on How the Presence of bN Data Help}

Through experiments we have demonstrated that the
presence of bN data effectively helps learning a better classifier.
Here we would like to provide some intuition for the
reason behind this.
Let us consider the MNIST learning task where $\mathcal{X}\sub{bN}$
is uniformly sampled from the latent categories 1, 3 and 5.
We project the representations learned by the classifier
(i.e., the activation values of the last hidden layer of the neural network)
into a 2D plane using PCA for both nnPU and PUbN algorithms,
as shown in \autoref{fig: PCA}.

\begin{figure}[t]
  \centering
  \includegraphics[width=\linewidth]{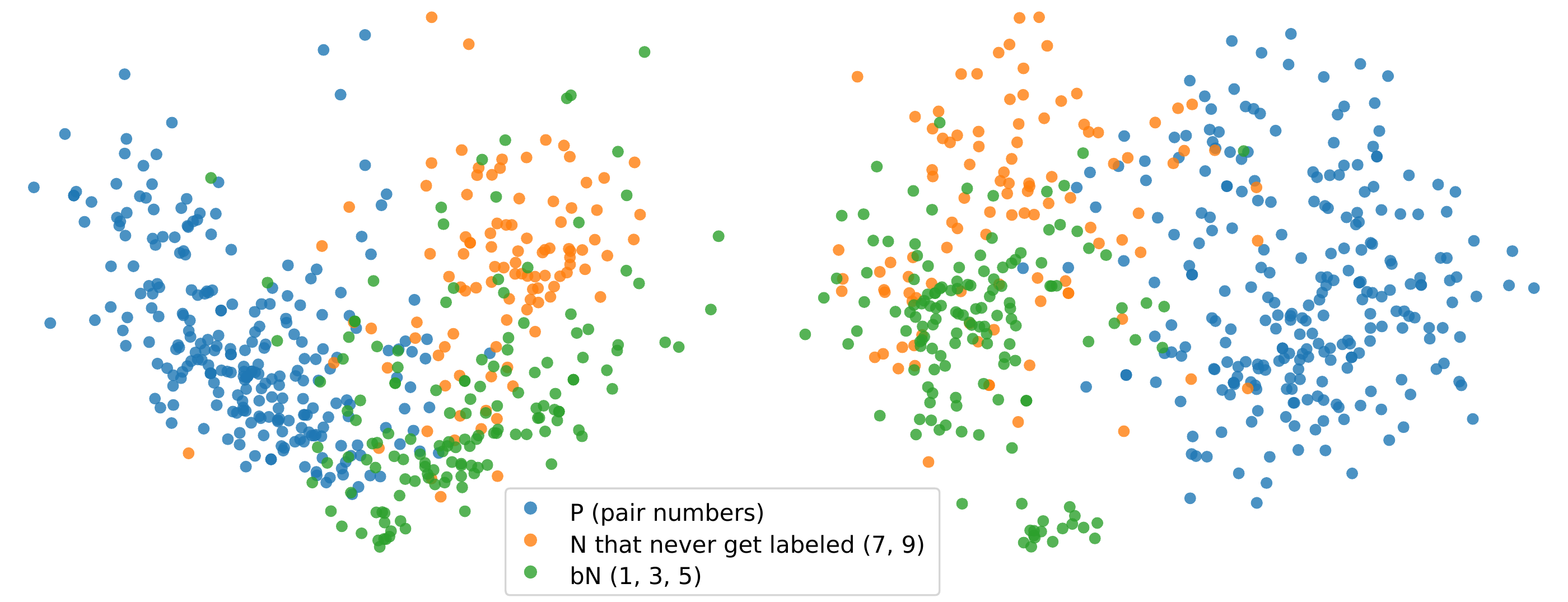}
  \begin{subfigure}{0.48\linewidth}
    \vspace{-0.8em}
    \caption{nnPU}
  \end{subfigure}
  \hfill
  \begin{subfigure}{0.5\linewidth}
    \vspace{-0.8em}
    \caption{PUbN}
  \end{subfigure}
  \vspace{-1em}
  \caption{
    PCA embeddings of the representations learned by the nnPU and
    PUbN classifiers for 500 samples from the test set
    in the MNIST learning task where $\mathcal{X}\sub{bn}$ is
    uniformly sampled from latent categories 1, 3 and 5.
  }
  \label{fig: PCA}
\end{figure}

For both nnPU and PUbN classifiers, the first two 
principal components account around 90\% of variance.
We can therefore presume that the figure depicts
fairly well the learned representations.
Thanks to the use of bN data, in the high-level feature space
1, 3, 5 and P data are further pushed away when we employ
the proposed PUbN learning algorithm,
and we are always able to separate 7, 9 from P to some extent.
This explains the better performance which is achieved by PUbN learning
and the benefit of incorporating bN data into the learning process.

\begin{figure*}[t]
  \centering
  \begin{subfigure}[b]{\linewidth}
    \centering
    \includegraphics[width=0.33\linewidth]{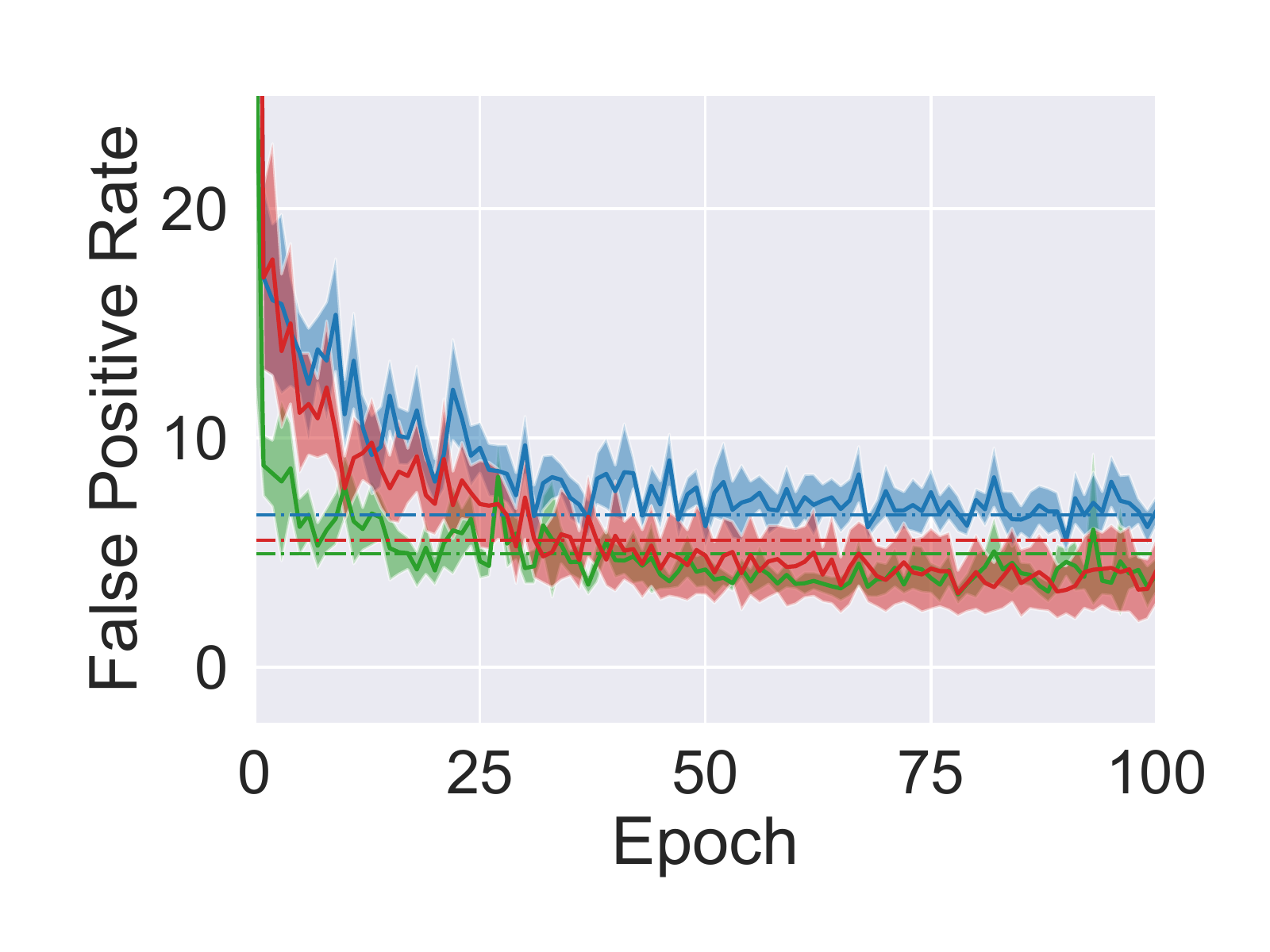}
    \includegraphics[width=0.33\linewidth]{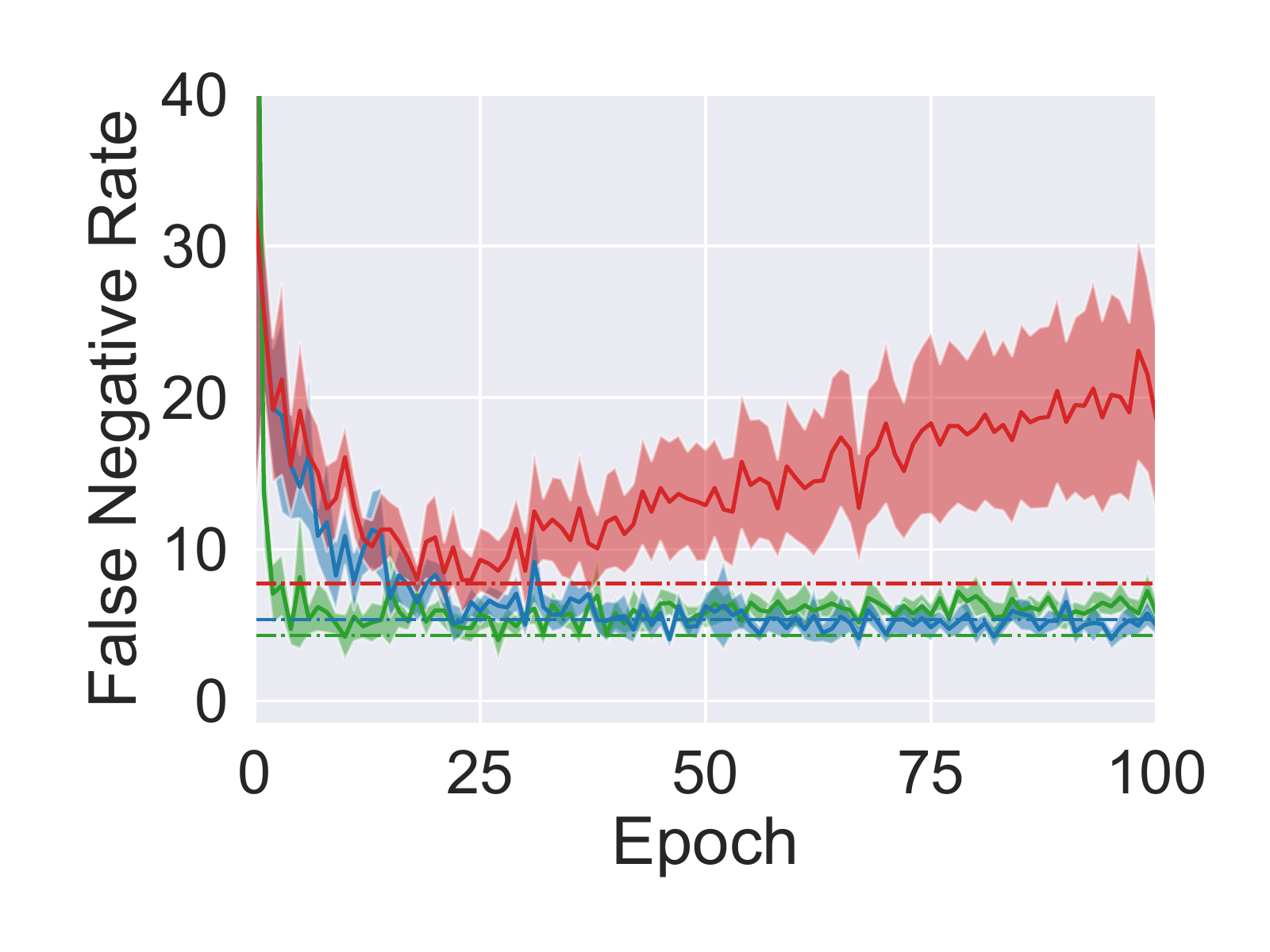}
    \includegraphics[width=0.33\linewidth]{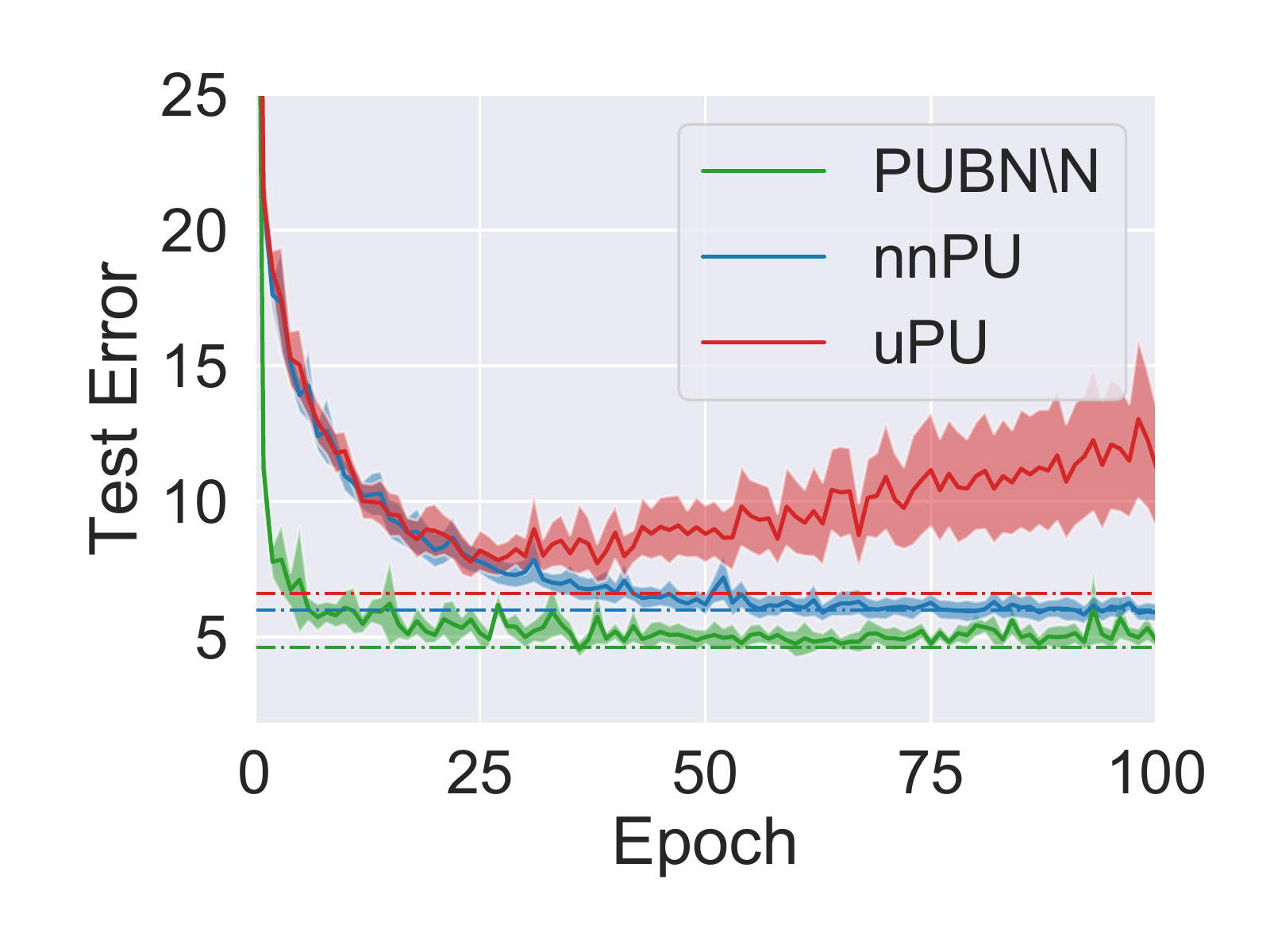}
    \vspace{-2em}
    \caption{MNIST}
  \end{subfigure}
  \begin{subfigure}[b]{\linewidth}
    \centering
    \includegraphics[width=0.33\linewidth]{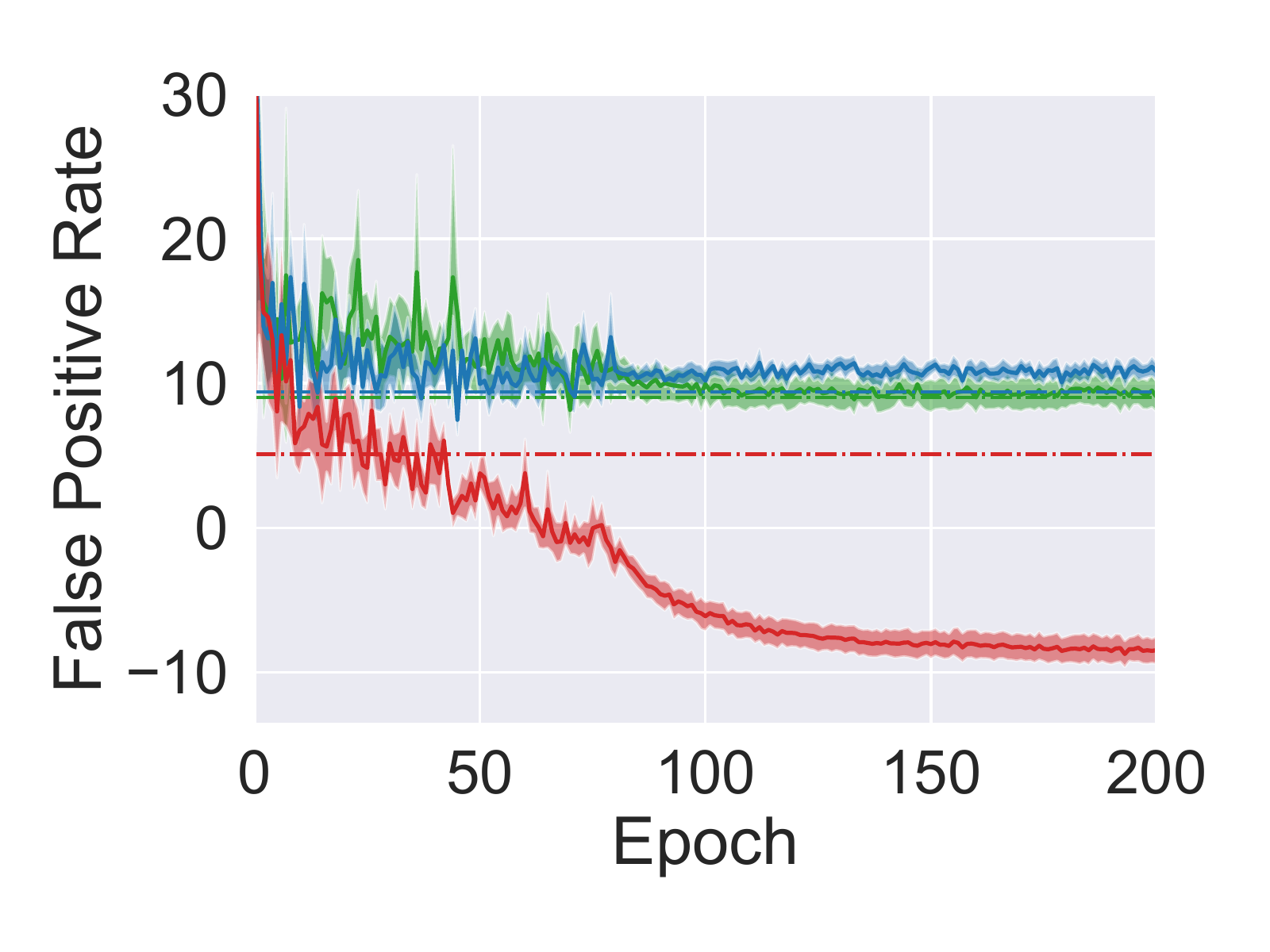}
    \includegraphics[width=0.33\linewidth]{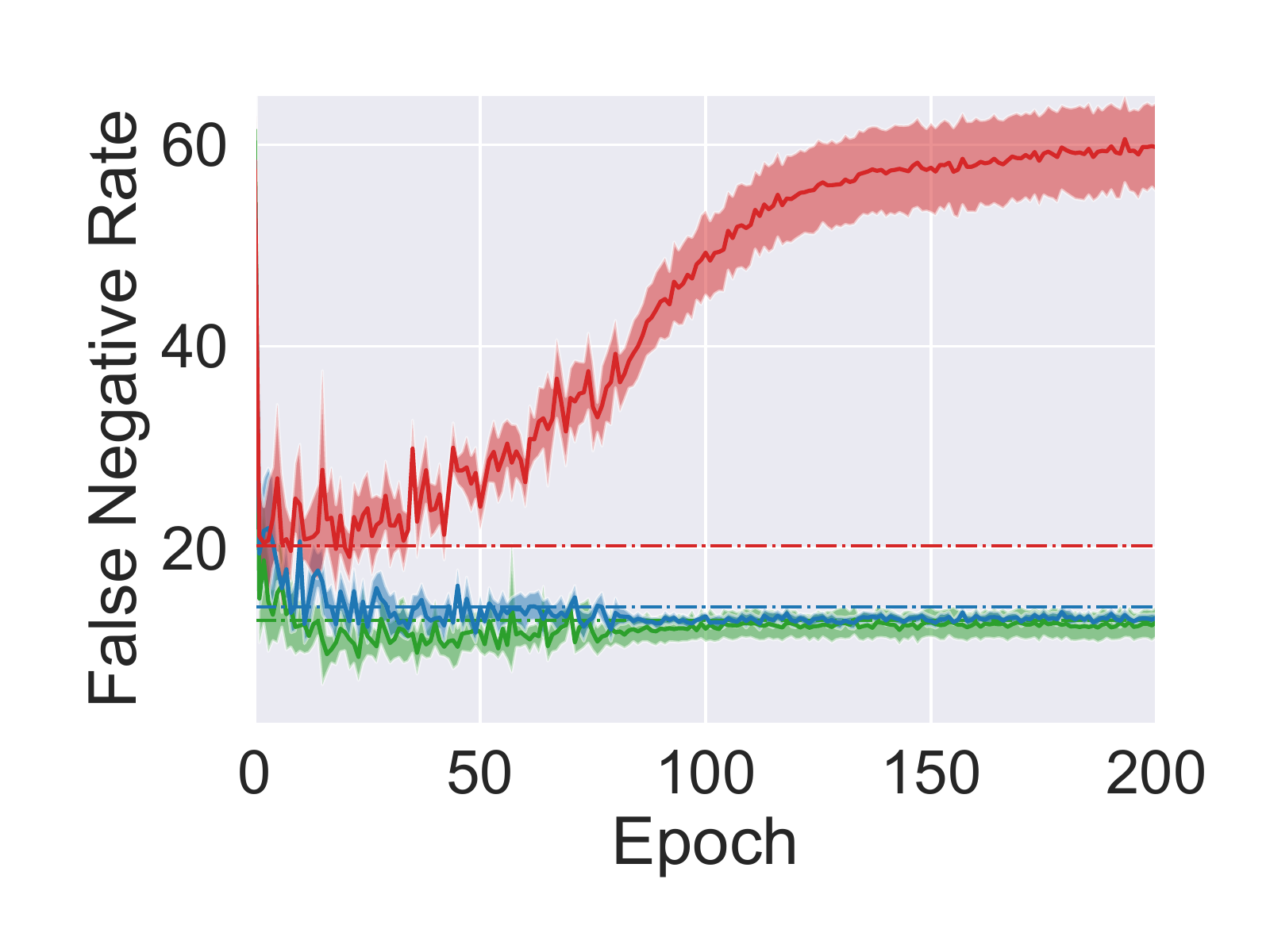}
    \includegraphics[width=0.33\linewidth]{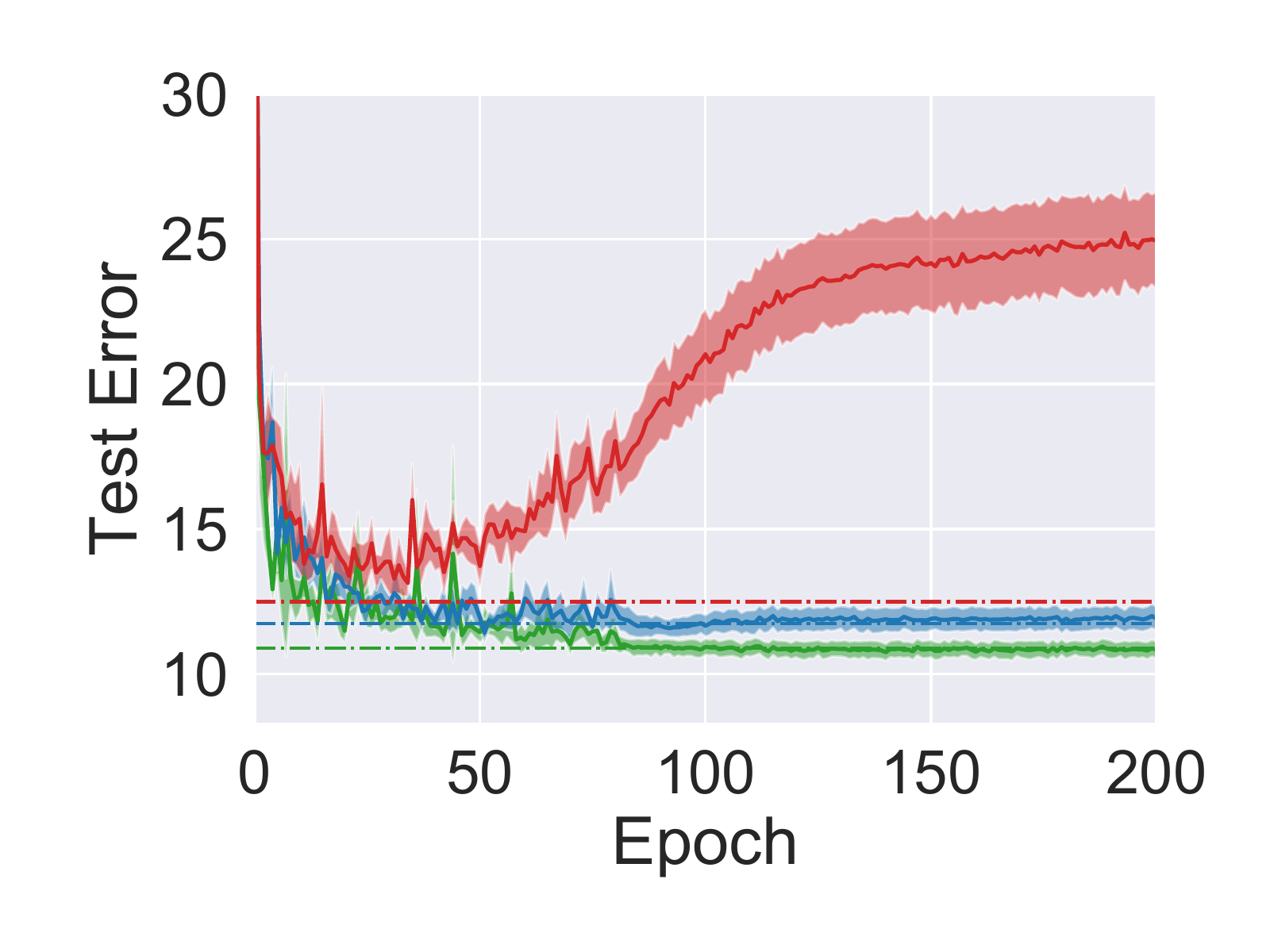}
    \vspace{-2em}
    \caption{CIFAR-10, vehicles as P class}
  \end{subfigure}
  \caption{
    Comparison of uPU, nnPU and PUbN\textbackslash N over two
    of the four PU learning tasks.
    For each task, means and standard deviations are computed
    based on the same 10 random samplings.
    Dashed lines indicate the corresponding values of the final
    classifiers (recall that at the end we select the model
    with the lowest validation loss out of all epochs).
  }
  \label{fig: PU}
\vspace*{-0.2em}
\end{figure*}

\subsection{Why Does PUbN\textbackslash N Outperform nnPU ?}
\label{subsec: why}

Our method, specifically designed for PUbN learning, naturally
outperforms other baseline methods in this problem.
Nonetheless, \autoref{tab: effectiveness} equally shows that
the proposed method when applied
to PU learning, achieves significantly better performance than
the state-of-the-art nnPU algorithm.
Here we numerically investigate the reason behind this phenomenon
with help of the first two PU tasks of the table.

Besides nnPU and PUbN\textbackslash N, we compare with unbiased
PU (uPU) learning \eqref{eq: pu risk estimator}.
Both uPU and nnPU are learned with the sigmoid loss,
learning rate $10^{-3}$ for MNIST and
initial learning rate $10^{-4}$ for CIFAR-10,
as uPU learning is unstable with the logistic loss.
The other parts of the experiments remain unchanged.
On the test sets we compute the false positive rates,
false negative rates and misclassification
errors for the three methods and plot them in \autoref{fig: PU}.
We first notice that PUbN\textbackslash N
still outperforms nnPU trained with the sigmoid loss.
In fact, the final performance of the nnPU classifier
does not change much when we replace the logistic
loss with the sigmoid loss.

In \cite{kiryo2017positive}, the authors observed that uPU overfits
training data with the risk going to negative.
In other words, a large portion of U samples
are classified as N.
This is confirmed in our experiments by an increase of false negative
rate and decrease of false positive rate.
nnPU remedies the problem by introducing the non-negative risk estimator
\eqref{eq: nnpu risk estimator}.
While the non-negative correction successfully prevents false negative
rate from going up, it also causes more N samples to be classified as
P compared to uPU\@.
However, since the gain in terms of false negative rate is enormous,
at the end nnPU achieves a lower misclassification error.
By further identifying possible N samples after nnPU learning,
we expect that our algorithm can yield lower false positive rate
than nnPU without misclassifying too many P samples as N as in the case
of uPU.
\autoref{fig: PU} suggests that this is effectively the case.
In particular, we observe that on MNIST, our method achieves the
same false positive rate as uPU whereas its false negative
rate is comparable to nnPU.

\section{Conclusion}

This paper studies the PUbN classification problem, where a
binary classifier is trained on P, U and bN data.
The proposed method is a two-step approach inspired from
both PU learning and importance weighting.
The key idea is to attribute appropriate weights to each
example for evaluation of the classification
risk using the three sets of data.
We theoretically established an estimation error bound
for the proposed risk estimator and
experimentally
showed that our approach successfully
leveraged bN data to improve the classification performance
on several real-world datasets.
% We also numerically investigated how our approach is
% influenced by several factors involved in the
% learning procedure.
A variant of our algorithm was able to achieve
state-of-the-art results in PU learning.

\section*{Acknowledgements}
MS was supported by JST CREST Grant Number JPMJCR1403.

\bibliography{PUbN}
\bibliographystyle{icml2019}

\newpage

\onecolumn
\icmltitle{
  \vspace{-0.5em}Classification from Positive, Unlabeled and Biased Negative Data:
Supplementary Material}

\appendix

\section*{Appendix}

\section{Proofs}

\subsection{Proof of Theorem 1}

We notice that $(1-\pi-\rho)p(\bm{x}\mid s=-1) = p(\bm{x}, s=-1)$
and that when $h(\bm{x})>\eta$, we have
$p(s=+1\mid\bm{x}) = \sigma(\bm{x}) > 0$, which allows us to write
$p(s=-1\mid\bm{x}) = (p(s=-1\mid\bm{x})/p(s=+1\mid\bm{x}))p(s=+1\mid\bm{x})$.
We can thus decompose $\bar{R}_{s=-1}^-(g)$ as following:

\begin{align*}
  \bar{R}_{s=-1}^-(g)
  &= \int \ell(-g(\bm{x})) p(\bm{x}, s=-1) \,dx \\
  &= \int \mathds{1}_{h(\bm{x})\le\eta}\thinspace
     \ell(-g(\bm{x})) p(\bm{x}, s=-1) \,dx \\
  &\enspace
   + \int \mathds{1}_{h(\bm{x})>\eta}\thinspace
     \ell(-g(\bm{x})) p(\bm{x}, s=-1) \,dx \\
  &= \int \mathds{1}_{h(\bm{x})\le\eta}
     \thinspace\ell(-g(\bm{x}))
     \frac{p(\bm{x}, s=-1)}{p(\bm{x})} p(\bm{x})\,dx \\
  &\enspace
   + \int \mathds{1}_{h(\bm{x})>\eta}
     \thinspace\ell(-g(\bm{x}))
     \frac{p(\bm{x}, s=-1)}{p(\bm{x}, s=+1)} p(\bm{x}, s=+1)\,dx.
\end{align*}

By writing 
$p(\bm{x}, s=-1) = p(s=-1\mid\bm{x})p(\bm{x})
  = (1-\sigma(\bm{x}))p(\bm{x})$
and
$p(\bm{x}, s=+1) = p(s=+1\mid\bm{x})p(\bm{x}) = \sigma(\bm{x})p(\bm{x})$,
we have

\vspace{-6pt}
\begin{align*}
  \bar{R}_{s=-1}^-(g)
  &= \int \mathds{1}_{h(\bm{x})\le\eta}
     \thinspace\ell(-g(\bm{x}))
      (1 - \sigma(\bm{x})) p(\bm{x})\,dx \\
  &\enspace
   + \int \mathds{1}_{h(\bm{x})>\eta}
     \thinspace\ell(-g(\bm{x}))
     \frac{1-\sigma(\bm{x})}{\sigma(\bm{x})} p(\bm{x}, s=+1)\,dx.
\end{align*}

We obtain Equation \eqref{eq: decompose} after replacing
$p(\bm{x}, s=+1)$ by
$\pi p(x\mid y=+1)+\rho p(x\mid y=-1, s=+1)$.

\subsection{Proof of Theorem 2}

For $\hat{\sigma}$ and $\eta$ given, let us define

\[
  R_{\textrm{PUbN},\eta,\hat{\sigma}}(g) = 
  \pi R\sub{P}^+(g) + \rho R\sub{bN}^-(g)
  + \bar{R}_{s=-1, \eta, \hat{\sigma}}^-(g).
\]

The following lemma establishes the uniform deviation bound
from $\hat{R}_{\textrm{PUbN}, \eta, \hat{\sigma}}$
to $R_{\textrm{PUbN}, \eta, \hat{\sigma}}$.

\begin{lemma} \label{lemma: 1}
  Let $\hat{\sigma}: \mathbb{R}^d\rightarrow[0, 1]$ be a fixed
  function independent of data used to compute
  $\hat{R}_{\emph{PUbN}, \eta, \hat{\sigma}}$ and $\eta\in(0,1]$.
  For any $\delta>0$, with probability at least $1-\delta$,

\begin{multline*}
  \sup_{g\in\mathcal{G}}
  |\hat{R}_{\emph{PUbN},\eta,\hat{\sigma}}^-(g)
    -R_{\emph{PUbN},\eta,\hat{\sigma}}(g)|\\
  \enspace\le
  2 L_\ell \mathfrak{R}_{n\sub{\emph{U}}, p}(\mathcal{G})
  + \frac{2\pi L_\ell}{\eta}
    \mathfrak{R}_{n\sub{\emph{P}}, p\sub{\emph{P}}}(\mathcal{G})
  + \frac{2\rho L_\ell}{\eta}
    \mathfrak{R}_{n\sub{\emph{bN}},p\sub{\emph{bN}}}(\mathcal{G})
  + C_\ell\sqrt{\frac{\ln(6/\delta)}{2n\sub{\emph{U}}}}
  + \frac{\pi C_\ell}{\eta}
    \sqrt{\frac{\ln(6/\delta)}{2n\sub{\emph{P}}}}
  + \frac{\rho C_\ell}{\eta}
    \sqrt{\frac{\ln(6/\delta)}{2n\sub{\emph{bN}}}}.
\end{multline*}
\end{lemma}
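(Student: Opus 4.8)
The plan is to prove this as a standard uniform-deviation bound, exploiting the fact that $\hat{R}_{\textrm{PUbN},\eta,\hat\sigma}$ is built from three empirical averages taken over the independent samples $\mathcal{X}\sub{U}$, $\mathcal{X}\sub{P}$, $\mathcal{X}\sub{bN}$. First I would regroup the empirical risk by data source: the U-average carries the single summand $\mathds{1}_{\hat\sigma(\bm{x})\le\eta}\thinspace\ell(-g(\bm{x}))(1-\hat\sigma(\bm{x}))$; the P-average carries $\ell(g(\bm{x})) + \mathds{1}_{\hat\sigma(\bm{x})>\eta}\thinspace\ell(-g(\bm{x}))\frac{1-\hat\sigma(\bm{x})}{\hat\sigma(\bm{x})}$ with outer weight $\pi$; and the bN-average carries $\ell(-g(\bm{x}))\bigl(1 + \mathds{1}_{\hat\sigma(\bm{x})>\eta}\frac{1-\hat\sigma(\bm{x})}{\hat\sigma(\bm{x})}\bigr)$ with outer weight $\rho$, so that $\pi\hat{R}\sub{P}^+$ and $\rho\hat{R}\sub{bN}^-$ merge into the P- and bN-summands of $\hat{\bar R}_{s=-1,\eta,\hat\sigma}$. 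Because $\hat\sigma$ is assumed fixed and independent of these data, the indicators and weights are deterministic functions of $\bm{x}$ rather than random quantities entangled with the samples; this independence is the structural hypothesis the whole argument rests on. By the triangle inequality, $\sup_{g\in\mathcal{G}}|\hat{R}_{\textrm{PUbN},\eta,\hat\sigma}(g) - R_{\textrm{PUbN},\eta,\hat\sigma}(g)|$ splits into a sum of three source-wise deviations, which I would bound one at a time.

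For each source I would follow the McDiarmid-plus-symmetrization recipe, needing two quantities per summand: its range (for the bounded-difference step) and its Lipschitz constant in $z=g(\bm{x})$ (for the contraction step). The key computation is that whenever the indicator $\mathds{1}_{\hat\sigma>\eta}$ is active the combined weight satisfies $1 + \frac{1-\hat\sigma}{\hat\sigma} = 1/\hat\sigma \le 1/\eta$, while when it is inactive the weight is $1\le 1/\eta$; hence the P- and bN-summands are bounded by $C_\ell/\eta$ and are $(L_\ell/\eta)$-Lipschitz in $z$, whereas the U-summand is bounded by $C_\ell$ and $L_\ell$-Lipschitz since $1-\hat\sigma\le 1$. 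This is exactly where the $1/\eta$ factors in the statement originate, and I expect verifying these constants — and checking that combining the unweighted risk terms with the weighted terms makes them add up to precisely $1/\eta$ rather than something looser — to be the \emph{main} (though elementary) obstacle. Applying McDiarmid to each source-wise supremum gives concentration around its expectation with deviation $M\sqrt{\ln(\cdot)/2n}$, where $M$ is the range times the outer weight ($C_\ell$ for U, $\pi C_\ell/\eta$ for P, $\rho C_\ell/\eta$ for bN); symmetrization followed by Talagrand's contraction lemma (applied with per-example Lipschitz functions, legitimate since they depend on $\bm{x}$ but not on $g$) bounds each expected supremum by $2L_\ell$ times the appropriate Rademacher complexity and outer weight, producing the three Rademacher terms $2L_\ell\mathfrak{R}_{n\sub{U},p}(\mathcal{G})$, $\frac{2\pi L_\ell}{\eta}\mathfrak{R}_{n\sub{P},p\sub{P}}(\mathcal{G})$, $\frac{2\rho L_\ell}{\eta}\mathfrak{R}_{n\sub{bN},p\sub{bN}}(\mathcal{G})$.

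Finally I would assemble the pieces with a union bound. To control the absolute value I would bound $\sup_g\Delta\sub{source}(g)$ and $\sup_g(-\Delta\sub{source}(g))$ separately; since symmetrization yields the same Rademacher bound in both directions, taking the maximum keeps each Rademacher coefficient at $2L_\ell$ rather than doubling it, while the union over the $6=3\times 2$ events (three sources, two directions) forces each McDiarmid tail to carry mass $\delta/6$, which is precisely what produces the $\ln(6/\delta)$ in the statement. Summing the three source-wise high-probability bounds then gives the claimed inequality. Apart from the constant bookkeeping, the only genuinely non-routine point is the weighted contraction step of the preceding paragraph, which is what keeps the constants tight.
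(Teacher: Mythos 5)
Your proposal is correct and follows essentially the same route as the paper's proof: the same regrouping of $\pi\hat{R}\sub{P}^+$, $\rho\hat{R}\sub{bN}^-$ and the three terms of $\hat{\bar{R}}_{s=-1,\eta,\hat{\sigma}}^-$ into per-source summands, the same sub-additivity split of the supremum, the same range/Lipschitz computation giving $C_\ell/\eta$ and $L_\ell/\eta$ via $1+\frac{1-\hat{\sigma}}{\hat{\sigma}}=\frac{1}{\hat{\sigma}}\le\frac{1}{\eta}$, and the same McDiarmid-plus-symmetrization argument with the per-example (vector) version of Talagrand's contraction lemma, with $\ln(6/\delta)$ arising from the three-source, two-sided accounting. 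No gaps; the constants and the independence assumption on $\hat{\sigma}$ are handled exactly as in the paper.
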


\begin{proof}
  
  For ease of notation, let 
  
  \begingroup
  \addtolength{\jot}{0.5em}
  \begin{align*}
    R\sub{P}(g) &=
    \mathbb{E}_{\bm{x}\sim p\sub{P}(\bm{x})}
      \left[
        \ell(g(\bm{x}))
        + \mathds{1}_{\hat{\sigma}(\bm{x})>\eta}
          \thinspace\ell(-g(\bm{x}))
          \frac{1-\hat{\sigma}(\bm{x})}{\hat{\sigma}(\bm{x})}
      \right],\\
    R\sub{bN}(g) &=
    \mathbb{E}_{\bm{x}\sim p\sub{bN}(\bm{x})}
      \left[
        \ell(-g(\bm{x}))
        \left(1 + \mathds{1}_{\hat{\sigma}(\bm{x})>\eta}
          \frac{1-\hat{\sigma}(\bm{x})}{\hat{\sigma}(\bm{x})}\right)
      \right],\\
    R\sub{U}(g) &=
    \mathbb{E}_{\bm{x}\sim p(\bm{x})}
      \left[
        \mathds{1}_{\hat{\sigma}(\bm{x})\le\eta}
        \thinspace\ell(-g(\bm{x}))(1-\hat{\sigma}(\bm{x}))
      \right],\\
    \hat{R}\sub{P}(g) &=
    \frac{1}{n\sub{P}}\sum_{i=1}^{n\sub{P}}
      \left[
        \ell(g(\bm{x}_i\super{P}))
        + \mathds{1}_{\hat{\sigma}(\bm{x}_i\super{P})>\eta}
          \thinspace\ell(-g(\bm{x}_i\super{P}))
          \frac{1-\hat{\sigma}(\bm{x}_i\super{P})}
            {\hat{\sigma}(\bm{x}_i\super{P})}
      \right],\\
    \hat{R}\sub{bN}(g) &=
    \frac{1}{n\sub{bN}}\sum_{i=1}^{n\sub{bN}}
      \left[
        \ell(-g(\bm{x}_i\super{bN}))
        \left(1 + \mathds{1}_{\hat{\sigma}(\bm{x}_i\super{bN})>\eta}
          \frac{1-\hat{\sigma}(\bm{x}_i\super{bN})}
               {\hat{\sigma}(\bm{x}_i\super{bN})}\right)
      \right],\\
    \hat{R}\sub{U}(g) &=
    \frac{1}{n\sub{U}}\sum_{i=1}^{n\sub{U}}
      \left[
        \mathds{1}_{\hat{\sigma}(\bm{x}_i\super{U})\le\eta}
        \thinspace\ell(-g(\bm{x}_i\super{U}))
        (1-\hat{\sigma}(\bm{x}_i\super{U}))
      \right].
  \end{align*}
  \endgroup

  From the sub-additivity of the supremum operator, we have

  \begin{align*}
    &\sup_{g\in\mathcal{G}}
    |\hat{R}_{\textrm{PUbN},\eta,\hat{\sigma}}^-(g)
      -R_{\textrm{PUbN},\eta,\hat{\sigma}}(g)|\\
    &\enspace\le
    \pi\sup_{g\in\mathcal{G}}|\hat{R}\sub{P}(g)-R\sub{P}(g)|
    +\rho\sup_{g\in\mathcal{G}}|\hat{R}\sub{bN}(g)-R\sub{bN}(g)|
    +\sup_{g\in\mathcal{G}}|\hat{R}\sub{U}(g)-R\sub{U}(g)|.
  \end{align*}

  As a consequence, to conclude the proof, it suffices to prove that
  with probability at
  least $1-\delta/3$, the following bounds hold separately:
    
  \vspace{-8pt}
  \begin{align}
    \label{eq: p deviation}
    \sup_{g\in\mathcal{G}}|\hat{R}\sub{P}(g)-R\sub{P}(g)|
    & \le \frac{2L_\ell}{\eta}
      \mathfrak{R}_{n\sub{P}, p\sub{P}}(\mathcal{G})
      + \frac{C_\ell}{\eta}
        \sqrt{\frac{\ln(6/\delta)}{2n\sub{P}}},\\
    \label{eq: bn deviation}
    \sup_{g\in\mathcal{G}}|\hat{R}\sub{bN}(g)-R\sub{bN}(g)|
    & \le \frac{2L_\ell}{\eta}
      \mathfrak{R}_{n\sub{bN},p\sub{bN}}(\mathcal{G})
      + \frac{C_\ell}{\eta}
        \sqrt{\frac{\ln(6/\delta)}{2n\sub{bN}}},\\
    \label{eq: u deviation}
    \sup_{g\in\mathcal{G}}|\hat{R}\sub{U}(g)-R\sub{U}(g)|
    & \le 2L_\ell\mathfrak{R}_{n\sub{U}, p}(\mathcal{G})
      + C_\ell\sqrt{\frac{\ln(6/\delta)}{2n\sub{U}}}.
  \end{align}

  Below we prove \eqref{eq: p deviation}.
  \eqref{eq: bn deviation} and \eqref{eq: u deviation} are
  proven similarly.

  Let
  $\phi_{\bm{x}}: \mathbb{R} \rightarrow \mathbb{R}_+$
  be the function defined by
  $\phi_{\bm{x}}: z \mapsto \ell(z)
    + \mathds{1}_{\hat{\sigma}(\bm{x})>\eta}
      \thinspace\ell(-z)
      ((1-\hat{\sigma}(\bm{x}))/\hat{\sigma}(\bm{x}))$.
  For $\bm{x}\in\mathbb{R}^d, g\in\mathcal{G}$,
  since
  $\ell(g(\bm{x}))\in[0, C_\ell]$,
  $\ell(-g(\bm{x}))\in[0, C_\ell]$ and
  $\mathds{1}_{\hat{\sigma}(\bm{x})>\eta}
  ((1-\hat{\sigma}(\bm{x}))/\hat{\sigma}(\bm{x}))
  \in[0, (1-\eta)/\eta]$,
  we always have
  $\phi_{\bm{x}}(g(\bm{x})) \in [0, C_\ell/\eta]$.
  Following the proof of Theorem 3.1 in \cite{mohri2012foundations},
  it is then straightforward to show that with probability at least
  $1-\delta/3$, it holds that

  \[
    \sup_{g\in\mathcal{G}}|\hat{R}\sub{P}(g)-R\sub{P}(g)|
    \le
    2 \thinspace\mathbb{E}_{\mathcal{X\sub{P}}\sim p\sub{P}^{n_{\mathrm{p}}}}
      \mathbb{E}_{\theta}
      \left[
        \sup_{g\in\mathcal{G}}\frac{1}{n\sub{P}}
        \sum_{i=1}^{n\sub{P}}\theta_i \phi_{\bm{x}_i}(g(\bm{x}_i))
      \right]
    + \frac{C_\ell}{\eta}
      \sqrt{\frac{\ln(6/\delta)}{2n\sub{P}}},
  \]

  where $\theta = \{\theta_1, \ldots, \theta_{n\sub{P}}\}$ and each $\theta_i$
  is a Rademacher variable.

  Also notice that for all $\bm{x}$, $\phi_{\bm{x}}$ is a
  $(L_\ell/\eta)$-Lipschitz function on the interval $[-C_g, C_g]$.
  By using a modified version of Talagrad's concentration lemma
  (specifically, Lemma 26.9 in \cite{Shalev-Shwartz:2014:UML:2621980}),
  we can show that, when the set $\mathcal{X}\sub{P}$ is fixed, we have

  \[
    \mathbb{E}_{\theta}
    \left[
      \sup_{g\in\mathcal{G}}\frac{1}{n\sub{P}}
      \sum_{i=1}^{n\sub{P}}\theta_i \phi_{\bm{x}_i}(g(\bm{x}_i))
    \right]
    \le
    \frac{L_\ell}{\eta}
    \mathbb{E}_{\theta}
    \left[
      \sup_{g\in\mathcal{G}}\frac{1}{n\sub{P}}
      \sum_{i=1}^{n\sub{P}}\theta_i g(\bm{x}_i)
    \right].
  \]

  In particular, as the inequality deals with empirical Rademacher complexity,
  the dependence of $\phi_{\bm{x}}$ on $\bm{x}$
  would not be an issue.
  In fact, with $\bm{x}$ being fixed, the indicator function
  $\mathds{1}_{\hat{\sigma}(\bm{x})>\eta}$
  is nothing but a constant and its discontinuity has
  nothing to do with the Lipschitz continuity of $\phi_{\bm{x}}$.
  We obtain Equation \eqref{eq: p deviation}
  After taking expectation over
  $\mathcal{X}\sub{P}\sim p\sub{P}^{n_{\mathrm{p}}}$.\\[-0.7em]
\end{proof}

However, what we really want to minimize is the true risk
$R(g)$. Therefore, we also need to bound the difference
between $R_{\textrm{PUbN},\eta,\hat{\sigma}}(g)$ and $R(g)$,
or equivalently, the difference between 
$\bar{R}_{s=-1,\eta,\hat{\sigma}}^-(g)$ and $\bar{R}_{s=-1}^-(g)$.

\begin{lemma} \label{lemma: 2}
  Let $\hat{\sigma}: \mathbb{R}^d\rightarrow[0, 1]$,
  $\eta\in(0,1]$, $\zeta=p(\hat{\sigma}\le\eta)$ and
  $\epsilon=\mathbb{E}_{\bm{x}\sim p(\bm{x})}
   [|\hat{\sigma}(\bm{x})-\sigma(\bm{x})|^2]$.
  For all $g\in\mathcal{G}$, it holds that

  \[
    |\bar{R}_{s=-1,\eta,\hat{\sigma}}^-(g) - \bar{R}_{s=-1}^-(g)|
    \le C_\ell\sqrt{\zeta\epsilon}
    + \frac{C_\ell}{\eta}\sqrt{(1-\zeta)\epsilon}.
  \]

\end{lemma}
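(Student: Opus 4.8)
The plan is to bound the difference between the two expressions term by term, exploiting the fact that $\bar{R}_{s=-1,\eta,\hat{\sigma}}^-(g)$ and $\bar{R}_{s=-1}^-(g)$ have the \emph{same} structure as in Theorem~1, differing only in whether $\sigma$ or $\hat{\sigma}$ appears (and with $h$ set equal to the occurring posterior). The key observation is that by Theorem~1 applied with $h=\sigma$, the true quantity $\bar{R}_{s=-1}^-(g)$ admits the decomposition in \eqref{eq: decompose}; meanwhile $\bar{R}_{s=-1,\eta,\hat{\sigma}}^-(g)$ is exactly the analogous three-term expression with $\sigma$ replaced by $\hat{\sigma}$. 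A direct subtraction then produces differences of integrands that I can control via $|\hat{\sigma}(\bm{x})-\sigma(\bm{x})|$.

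First I would rewrite both quantities as single integrals against $p(\bm{x})$. Since $\ell(-g(\bm{x}))\le C_\ell$, the region $\{\hat{\sigma}\le\eta\}$ contributes a difference whose integrand is bounded by $C_\ell\,\mathds{1}_{\hat{\sigma}(\bm{x})\le\eta}\,|\hat{\sigma}(\bm{x})-\sigma(\bm{x})|$, because the two expressions differ there only in the factor $(1-\sigma)$ versus $(1-\hat{\sigma})$. On the region $\{\hat{\sigma}>\eta\}$, combining the P and bN terms back into an integral against $p(\bm{x},s=+1)=\sigma(\bm{x})p(\bm{x})$ (reversing the last step of the proof of Theorem~1) lets me write that contribution as an integral of $\ell(-g(\bm{x}))\,\bigl((1-\hat\sigma)/\hat\sigma - (1-\sigma)/\sigma\bigr)\,\sigma(\bm{x})\,p(\bm{x})$ over $\{\hat\sigma>\eta\}$; here the factor $\bigl((1-\hat\sigma)/\hat\sigma\bigr)\sigma - (1-\sigma) = (\sigma-\hat\sigma)/\hat\sigma$, so on $\{\hat\sigma>\eta\}$ the integrand is bounded by $(C_\ell/\eta)|\hat{\sigma}(\bm{x})-\sigma(\bm{x})|$.

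Having split the difference into the two regions, I would apply the Cauchy--Schwarz inequality on each piece separately. On $\{\hat\sigma\le\eta\}$,
\[
  C_\ell\,\mathbb{E}_{\bm{x}\sim p}\bigl[\mathds{1}_{\hat\sigma\le\eta}\,|\hat\sigma-\sigma|\bigr]
  \le C_\ell\sqrt{\mathbb{E}[\mathds{1}_{\hat\sigma\le\eta}]}\,\sqrt{\mathbb{E}[|\hat\sigma-\sigma|^2]}
  = C_\ell\sqrt{\zeta\epsilon},
\]
using $\mathds{1}^2=\mathds{1}$ and the definitions $\zeta=p(\hat\sigma\le\eta)$, $\epsilon=\mathbb{E}[|\hat\sigma-\sigma|^2]$. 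Symmetrically, on $\{\hat\sigma>\eta\}$ the same argument with the prefactor $C_\ell/\eta$ and $p(\hat\sigma>\eta)=1-\zeta$ yields $(C_\ell/\eta)\sqrt{(1-\zeta)\epsilon}$. Adding the two bounds via the triangle inequality gives the claim.

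The main obstacle I anticipate is the algebraic manipulation on the region $\{\hat\sigma>\eta\}$: the naive term-by-term comparison of the P and bN expectations does \emph{not} immediately exhibit the clean factor $(\sigma-\hat\sigma)/\hat\sigma$, because the importance weights $(1-\hat\sigma)/\hat\sigma$ and the true weight $(1-\sigma)/\sigma$ are attached to \emph{different} densities only after being recombined against $p(\bm{x},s=+1)$. The crucial bookkeeping step is to run the identity $\pi\,p\sub{P}(\bm{x})+\rho\,p\sub{bN}(\bm{x})=p(\bm{x},s=+1)=\sigma(\bm{x})p(\bm{x})$ \emph{backwards} for both the $\hat\sigma$-expression and the $\sigma$-expression, so that the two collapse onto a common measure $\sigma(\bm{x})p(\bm{x})$ before differencing; only then does the $\sigma$ in the weight $(1-\sigma)/\sigma$ cancel against the $\sigma$ in the density, producing a telescoping that leaves the single bounded factor. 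Getting this recombination right—and verifying that the condition $\hat\sigma(\bm{x})>\eta>0$ guarantees all divisions are well-defined—is where I would spend the care.
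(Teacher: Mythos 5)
Your proposal is correct and takes essentially the same route as the paper's proof: collapse the P and bN terms onto the common measure via $\pi p\sub{P}(\bm{x})+\rho p\sub{bN}(\bm{x})=p(\bm{x},s=+1)=\sigma(\bm{x})p(\bm{x})$, split the difference over $\{\hat{\sigma}\le\eta\}$ and $\{\hat{\sigma}>\eta\}$, obtain the telescoped factor $(\sigma-\hat{\sigma})/\hat{\sigma}$ on the latter region (bounded by $|\hat{\sigma}-\sigma|/\eta$), and finish with Cauchy--Schwarz on each piece, exactly as in the paper's bounds on $|A_1-A_2|$ and $|B_1-B_2|$. The only cosmetic slip is in your plan, not your execution: the decomposition must use regions defined by $\hat{\sigma}$ rather than ``$h=\sigma$'', and the intermediate weight $(1-\sigma)/\sigma$ needlessly divides by $\sigma$, which your subsequent identity $\bigl((1-\hat{\sigma})/\hat{\sigma}\bigr)\sigma-(1-\sigma)=(\sigma-\hat{\sigma})/\hat{\sigma}$ correctly avoids.
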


\begin{proof}
  One one hand, we have
  
  \begin{align*}
    \bar{R}_{s=-1}^-(g)
    &= \underbrace{%
        \int \mathds{1}_{\hat{\sigma}(\bm{x})\le\eta}
        \thinspace\ell(-g(\bm{x}))
        (1 - \sigma(\bm{x})) p(\bm{x})\,dx}_{A_1} \\
    &\enspace
     + \underbrace{%
        \int \mathds{1}_{\hat{\sigma}(\bm{x})>\eta}
        \thinspace\ell(-g(\bm{x}))
        (1 - \sigma(\bm{x})) p(\bm{x})\,dx}_{B_1}.
  \end{align*}

  On the other hand, we can express
  $\bar{R}_{s=-1,\eta,\hat{\sigma}}^-(g)$ as

  \begin{align*}
    \bar{R}_{s=-1,\eta,\hat{\sigma}}^-(g)
    &= \int \mathds{1}_{\hat{\sigma}(\bm{x})\le\eta}
       \thinspace\ell(-g(\bm{x}))
       (1 - \hat{\sigma}(\bm{x})) p(\bm{x})\,dx \\
    &\enspace
     + \int \mathds{1}_{\hat{\sigma}(\bm{x})>\eta}
       \thinspace\ell(-g(\bm{x}))
       \frac{1 - \hat{\sigma}(\bm{x})}{\hat{\sigma}(x)}
       p(\bm{x}, s=+1)\,dx.\\
    &= \underbrace{%
        \int \mathds{1}_{\hat{\sigma}(\bm{x})\le\eta}
        \thinspace\ell(-g(\bm{x}))
        (1 - \hat{\sigma}(\bm{x})) p(\bm{x})\,dx}_{A_2} \\
    &\enspace
     + \underbrace{%
        \int \mathds{1}_{\hat{\sigma}(\bm{x})>\eta}
        \thinspace\ell(-g(\bm{x}))
        (1 - \hat{\sigma}(\bm{x}))
        \frac{\sigma(\bm{x})}{\hat{\sigma}(\bm{x})}
        p(\bm{x})\,dx}_{B_2}.
  \end{align*}

  The last equality follows from
  $p(\bm{x}, s=+1)=\sigma(\bm{x})p(\bm{x})$.
  As 
  $|\bar{R}_{s=-1,\eta,\hat{\sigma}}^-(g) - \bar{R}_{s=-1}^-(g)|
  \le |A_1-A_2| + |B_1-B_2|$,
  it is sufficient to derive bounds for $|A_1-A_2|$ and $|B_1-B_2|$ separately.
  For $|B_1-B_2|$, we write

  \begin{align*}
    |B_1-B_2| &\le 
      \int \mathds{1}_{\hat{\sigma}(\bm{x})>\eta}
      \thinspace\ell(-g(\bm{x}))
      \frac{|\hat{\sigma}(\bm{x})-\sigma(\bm{x})|}{\hat{\sigma}(\bm{x})}
      p(\bm{x})\,dx\\
    &\le 
      \frac{C_\ell}{\eta}
      \int \mathds{1}_{\hat{\sigma}(\bm{x})>\eta}
      |\hat{\sigma}(\bm{x})-\sigma(\bm{x})|
      p(\bm{x})\,dx\\
    &\le 
      \frac{C_\ell}{\eta}
      \left(\int \mathds{1}_{\hat{\sigma}(\bm{x})>\eta}^2 p(\bm{x})\,dx\right)
      ^{\frac{1}{2}}
      \left(\int |\hat{\sigma}(\bm{x})-\sigma(\bm{x})|^2 p(\bm{x})\,dx\right)
      ^{\frac{1}{2}}\\
    &= \frac{C_\ell}{\eta}\sqrt{(1-\zeta)\epsilon}
  \end{align*}

  From the second to the third line we use the Cauchy-Schwarz inequality.
  $|A_1-A_2| \le C_\ell\sqrt{\zeta\epsilon}$ can be proven similarly, which
  concludes the proof.\\[-0.7em]
\end{proof}

Combining lemma \ref{lemma: 1} and lemma \ref{lemma: 2}, we know that
with probability at least $1-\delta$, the following holds:

\begin{align*}
  &\sup_{g\in\mathcal{G}}
  |\hat{R}_{\textrm{PUbN},\eta,\hat{\sigma}}^-(g)-R(g)|\\
  &\enspace\le
  2 L_\ell \mathfrak{R}_{n\sub{U}, p}(\mathcal{G})
  + \frac{2\pi L_\ell}{\eta}
    \mathfrak{R}_{n\sub{P}, p\sub{P}}(\mathcal{G})
  + \frac{2\rho L_\ell}{\eta}
    \mathfrak{R}_{n\sub{bN},p\sub{bN}}(\mathcal{G}) \\
  &\enspace\quad
  + C_\ell\sqrt{\frac{\ln(6/\delta)}{2n\sub{U}}}
  + \frac{\pi C_\ell}{\eta}
    \sqrt{\frac{\ln(6/\delta)}{2n\sub{P}}}
  + \frac{\rho C_\ell}{\eta}
    \sqrt{\frac{\ln(6/\delta)}{2n\sub{bN}}}
  + C_\ell\sqrt{\zeta\epsilon}
  + \frac{C_\ell}{\eta}\sqrt{(1-\zeta)\epsilon}.
\end{align*}

Finally, with probability at least $1-\delta$,

\begingroup
\addtolength{\jot}{0.5em}
\begin{align*}
  &R(\hat{g}_{\textrm{PUbN}, \eta, \hat{\sigma}}) - R(g^*) \\
  &\enspace= (R(\hat{g}_{\textrm{PUbN}, \eta, \hat{\sigma}}) 
    - \hat{R}_{\textrm{PUbN},\eta,\hat{\sigma}}^-
      (\hat{g}_{\textrm{PUbN}, \eta, \hat{\sigma}}))\\
  &\enspace\quad
    + (\hat{R}_{\textrm{PUbN},\eta,\hat{\sigma}}^-
      (\hat{g}_{\textrm{PUbN}, \eta, \hat{\sigma}})
    - \hat{R}_{\textrm{PUbN},\eta,\hat{\sigma}}^-(g^*)) 
    + (\hat{R}_{\textrm{PUbN},\eta,\hat{\sigma}}^-(g^*) - R(g^*)) \\
  &\enspace\le
  \sup_{g\in\mathcal{G}}
  |\hat{R}_{\textrm{PUbN},\eta,\hat{\sigma}}^-(g)-R(g)|
  + 0
  + \sup_{g\in\mathcal{G}}
  |\hat{R}_{\textrm{PUbN},\eta,\hat{\sigma}}^-(g)-R(g)|\\
  &\enspace\le
  4 L_\ell \mathfrak{R}_{n\sub{U}, p}(\mathcal{G})
  + \frac{4\pi L_\ell}{\eta}
    \mathfrak{R}_{n\sub{P}, p\sub{P}}(\mathcal{G})
  + \frac{4\rho L_\ell}{\eta}
    \mathfrak{R}_{n\sub{bN},p\sub{bN}}(\mathcal{G}) \\
  &\enspace\quad
  + 2C_\ell\sqrt{\frac{\ln(6/\delta)}{2n\sub{U}}}
  + \frac{2\pi C_\ell}{\eta}
    \sqrt{\frac{\ln(6/\delta)}{2n\sub{P}}}
  + \frac{2\rho C_\ell}{\eta}
    \sqrt{\frac{\ln(6/\delta)}{2n\sub{bN}}}
  + 2C_\ell\sqrt{\zeta\epsilon}
  + \frac{2C_\ell}{\eta}\sqrt{(1-\zeta)\epsilon}.
\end{align*}
\endgroup

The first inequality uses the definition of
$\hat{g}_{\textrm{PUbN},\eta,\hat{\sigma}}$.

\section{Validation Loss for Estimation of $\sigma$} \label{app: val sig}

In terms of validation we want to choose the model for $\hat{\sigma}$
such that $J_0(\hat{\sigma}) =
\mathbb{E}_{\bm{x}\sim p(\bm{x})}
[|\hat{\sigma}(\bm{x})-\sigma(\bm{x})|^2]$
is minimized.
Since $\sigma(\bm{x})p(\bm{x}) = p(\bm{x}, s=+1)$, we have

\begin{align*}
  J_0(\hat{\sigma})
  &= \int (\hat{\sigma}(\bm{x}) - \sigma(\bm{x}))^2 p(\bm{x}) \,dx\\
  &= \int \hat{\sigma}(\bm{x})^2p(\bm{x}) \,dx
   - 2\int\hat{\sigma}(\bm{x})p(\bm{x}, s=+1)\,dx
   + \int\sigma(\bm{x})^2p(\bm{x})\,dx.
\end{align*}

The last term does not depend on $\hat{\sigma}$ and can be ignored
if we want to identify $\hat{\sigma}$ achieving the smallest
$J(\hat{\sigma})$. We denote by $J(\hat{\sigma})$ the sum
of the first two terms.
The middle term can be further expanded using
  
\[
  \int\hat{\sigma}(\bm{x})p(\bm{x}, s=+1)\,dx
  = \pi\int\hat{\sigma}(\bm{x})p(\bm{x}\mid y=+1)\,dx
  + \rho\int\hat{\sigma}(\bm{x})p(\bm{x}\mid y=-1, s=+1)\,dx.
\]

The validation loss of an estimation $\hat{\sigma}$ is then
defined as

\[
  \hat{J}(\hat{\sigma})
  = \frac{1}{n\sub{U}}\sum_{i=1}^{n\sub{U}}\hat{\sigma}(\bm{x}_i\super{U})^2
  - \frac{2\pi}{n\sub{P}}\sum_{i=1}^{n\sub{P}}\hat{\sigma}(\bm{x}_i\super{P})
  -
  \frac{2\rho}{n\sub{bN}}\sum_{i=1}^{n\sub{bN}}\hat{\sigma}(\bm{x}_i\super{bN}).
\]

It is also possible to minimize this value directly to acquire $\hat{\sigma}$.
In our experiments we decide to learn $\hat{\sigma}$ by nnPU for
a better comparison between different methods.

\section{Detailed Experimental Setting} \label{app: deexp}

\subsection{From Multiclass to Binary Class}

In the experiments we work on multiclass classification
datasets.
Therefore it is necessary to define the P and N classes ourselves.
MNIST is processed in such a way that pair numbers 0, 2, 4, 6, 8
form the P class and impair numbers 1, 3, 5, 7, 9 form the N class.
Accordingly, $\pi=0.49$.
For CIFAR-10, we consider two definitions of the P class.
The first one corresponds to a quite natural task that aims to
distinguish vehicles from animals.
Airplane, automobile, ship and truck are therefore defined to
be the P class while the N class is formed by bird, cat, deer, dog,
frog and horse.
For the sake of diversity, we also study another task
in which we attempt to distinguish the mammals from
the non-mammals.
The P class is  then formed by cat, deer, dog, and horse while
the N class consists of the other six categories.
We have $\pi=0.4$ in the two cases.
As for 20 Newsgroups, alt., comp., misc.\ and rec.\ make up the
P class whereas sci., soc.\ and talk.\ make up the N class.
This gives $\pi=0.56$.

\subsection{Training, Validation and Test Set} \label{app: tvt}

For the three datasets, we use the standard test examples as a
held-out test set.
The test set size is thus of 10000 for MNIST and CIFAR-10,
and 7528 for 20 Newsgroups.
Regarding the training set, we sample 500, 500 and 6000
P, bN and U training examples for MNIST and 20 Newsgroups,
and 1000, 1000 and 10000 P, bN and U training examples
for CIFAR-10.
The validation set is always five times smaller than the
training set.

\subsection{20 Newsgroups Preprocessing}

The original 20 Newsgroups dataset contains raw text data
and needs to be preprocessed into text feature vectors
for classification.
In our experiments we borrow the pre-trained
ELMo word embedding \citep{Peters:2018} from
\href{https://allennlp.org/elmo}{https://allennlp.org/elmo}.
The used 5.5B model was, according to the website,
trained on a dataset of 5.5B tokens
consisting of Wikipedia (1.9B) and all of the monolingual news
crawl data from WMT 2008-2012 (3.6B). 
For each word, we concatenate the features from the
three layers of the ELMo model, and for each document,
as suggested by \citet{andreas2018conc},
we concatenate the average, minimum, and maximum computed
along the word dimension.
This results in a 9216-dimensional feature vector for
a single document.

\subsection{Models and Hyperparameters}

\textbf{Shared}\enspace
The nnPU threshold parameter $\beta$ and the weight decay
are respectively fixed at $0$ and $10^{-4}$.
Other hyperparameters including $\tau\in\{0.5, 0.7, 0.9\}$,
$\gamma\in\{0.1, 0.3, 0.5, 0.7, 0.9\}$ and learning rate
are selected with validation data.

\textbf{MNIST}\enspace
For MNIST, we use a standard ConvNet with ReLU.
This model contains two 5x5 convolutional layers and one
fully-connected layer, with each convolutional layer
followed by a 2x2 max pooling.
The channel sizes are 5-10-40.
The model is trained for 100 epochs with each minibatch 
made up of 10 P, 10 bN
(if available) and 120 U samples. 
The learning rate is selected from the range
$\alpha\in\{10^{-2}, 10^{-3}\}$.

\textbf{CIFAR-10}\enspace
For CIFAR-10, we train PreAct ResNet-18 
\citep{he2016identity} for 200 epochs and
the learning rate is divided by 10 after 80 epochs and 120 epochs.
This is a common practice and similar adjustment can be found
in \cite{he2016identity}. 
The minibatch size is 1/100 of the number of training
samples, and the initial learning rate is chosen from
$\{10^{-2}, 10^{-3}\}$.

\textbf{20 Newsgroups}\enspace
For 20 Newsgroups, with the extracted features, we simply
train a multilayer perceptron with two hidden layers of
300 neurons for 50 epochs.
We use basically the same hyperparameters as for MNIST except
that the learning rate $\alpha$ is selected from
$\{5\cdot 10^{-3}, 10^{-3}, 5\cdot 10^{-4}\}$.

\section{Additional Experiments}

\subsection{Why Does PUbN\textbackslash N Outperform nnPU ?}

Here we complete the results presented in
Section \ref{subsec: why} with the plots on the other
two PU learning tasks (\autoref{fig: PU_comp}).
We recall that we compare between PUbN\textbackslash N,
nnPU and uPU learning, and that
both uPU and nnPU are learned with the sigmoid loss,
learning rate $10^{-3}$ for MNIST
and initial learning rate $10^{-4}$ for CIFAR-10.
The learning rate is $10^{-4}$ for 20 Newsgroups.

\begin{figure}[h!]
  \centering
  \begin{subfigure}[b]{\textwidth}
    \centering
    \includegraphics[width=0.32\linewidth]{MNIST_PU_false_positive_rate}
    \includegraphics[width=0.32\linewidth]{MNIST_PU_false_negative_rate}
    \includegraphics[width=0.32\linewidth]{MNIST_PU_error}
    \vspace{-0.5em}
    \caption{MNIST}
  \end{subfigure}
  \begin{subfigure}[b]{\textwidth}
    \centering
    \includegraphics[width=0.32\linewidth]{CIFAR-10_tPU_false_positive_rate}
    \includegraphics[width=0.32\linewidth]{CIFAR-10_tPU_false_negative_rate}
    \includegraphics[width=0.32\linewidth]{CIFAR-10_tPU_error}
    \vspace{-0.5em}
    \caption{CIFAR-10, vehicles as P class}
  \end{subfigure}
  \begin{subfigure}[b]{\textwidth}
    \centering
    \includegraphics[width=0.32\linewidth]{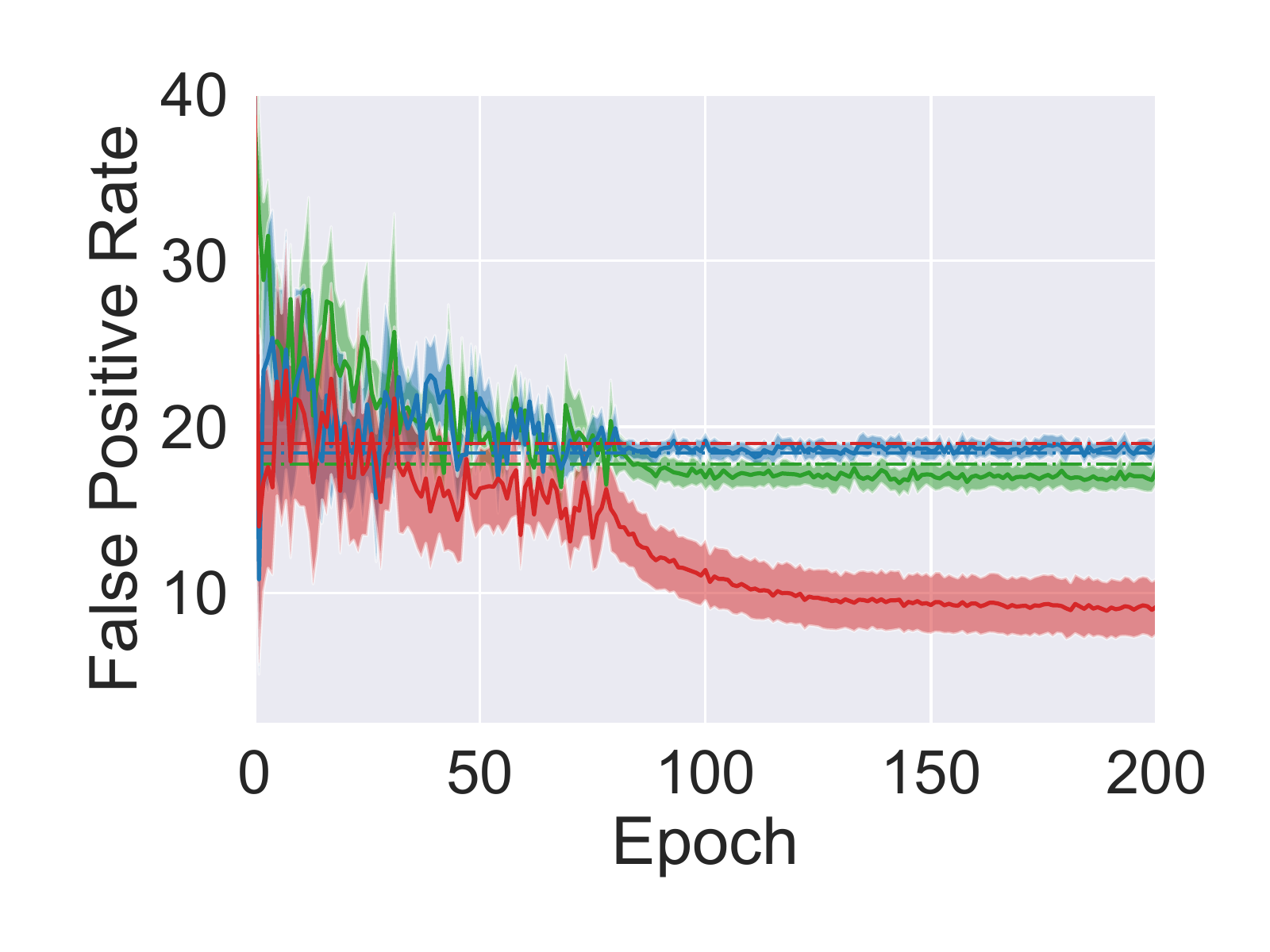}
    \includegraphics[width=0.32\linewidth]{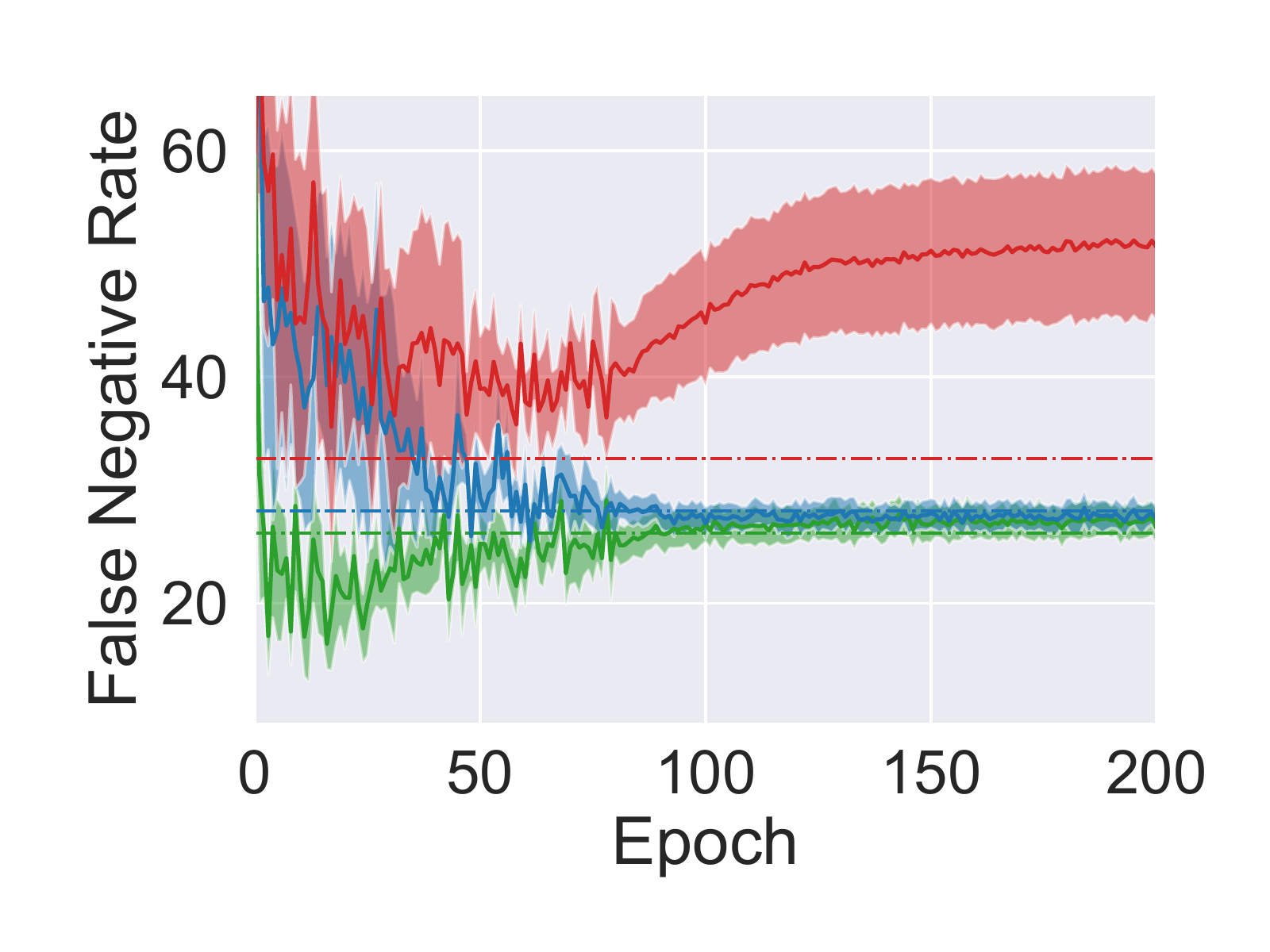}
    \includegraphics[width=0.32\linewidth]{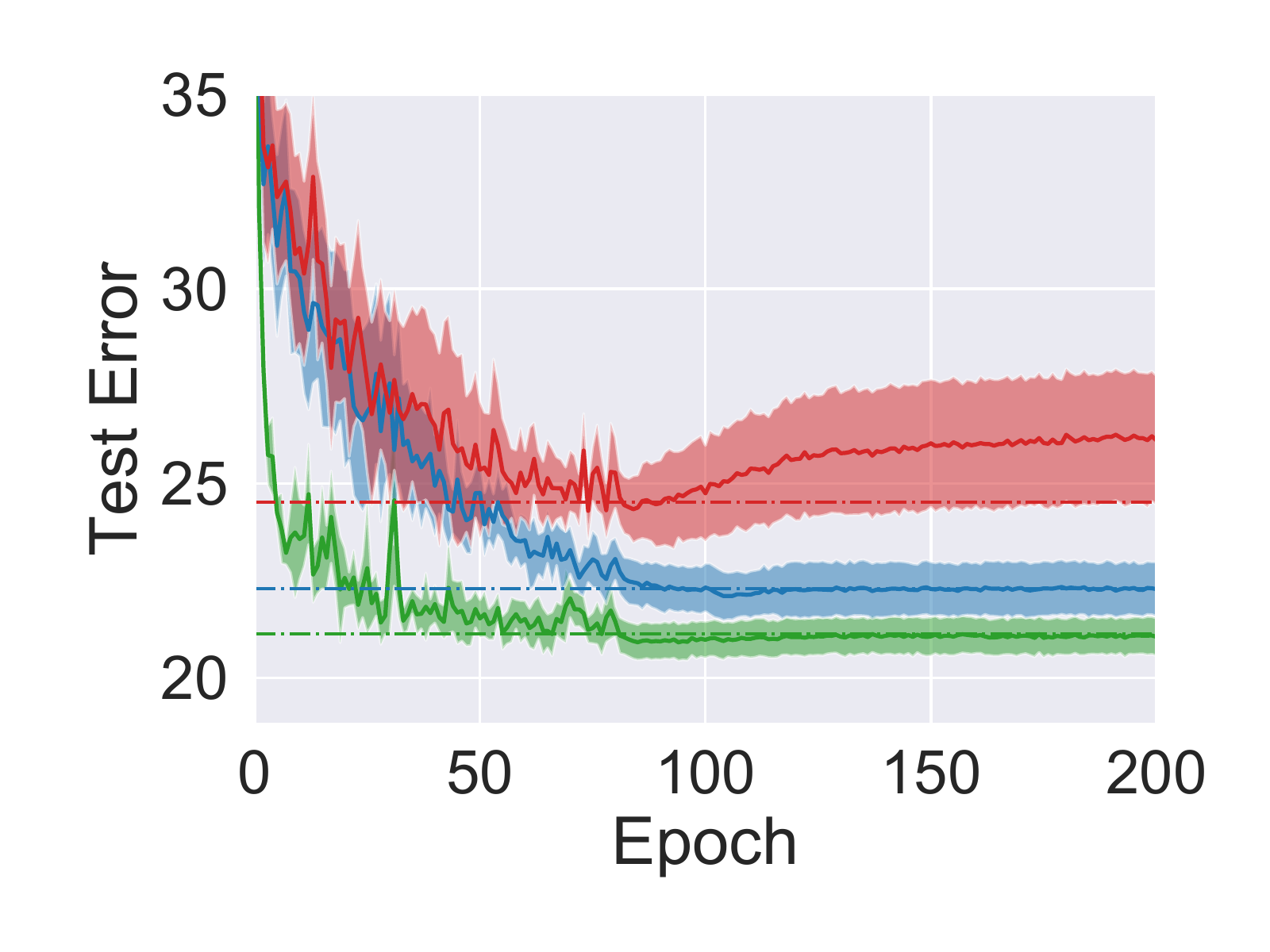}
    \vspace{-0.5em}
    \caption{CIFAR-10, mammals as P class}
  \end{subfigure}
  \begin{subfigure}[b]{\textwidth}
    \centering
    \includegraphics[width=0.32\linewidth]{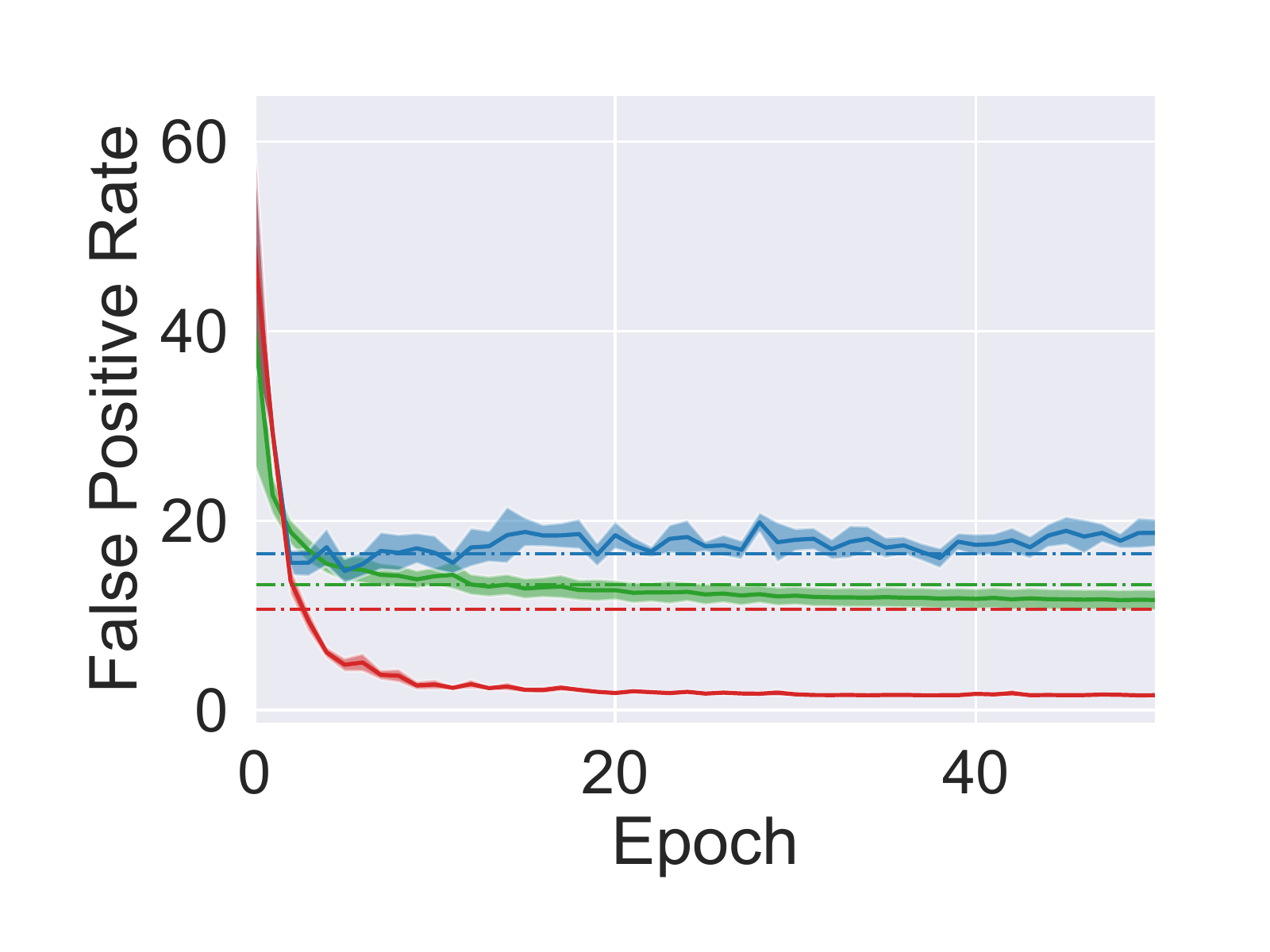}
    \includegraphics[width=0.32\linewidth]{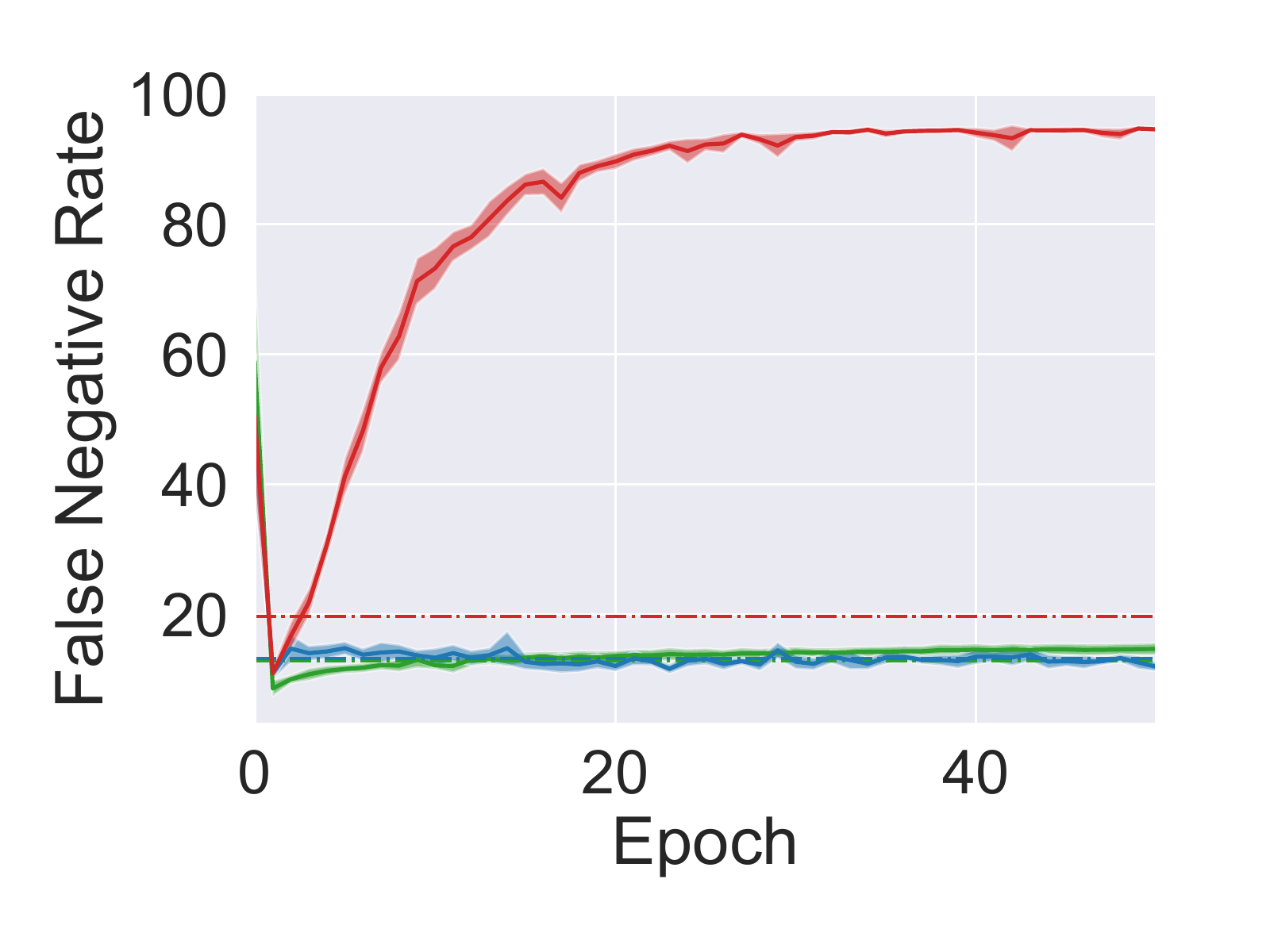}
    \includegraphics[width=0.32\linewidth]{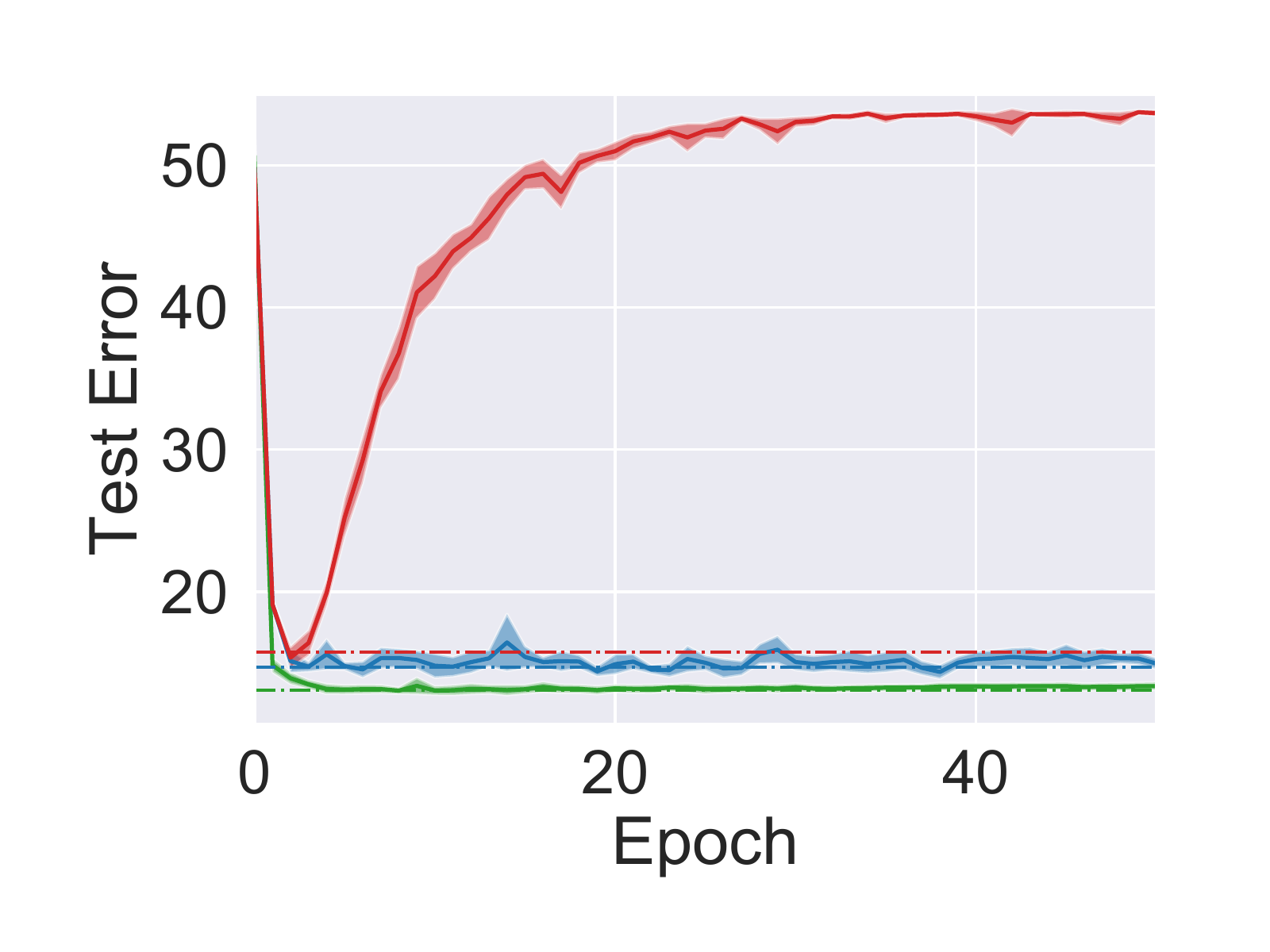}
    \vspace{-0.5em}
    \caption{20 Newsgroups}
  \end{subfigure}
  \caption{
    Comparison of uPU, nnPU and PUbN\textbackslash N over the four
    PU learning tasks.
    For each task, means and standard deviations are computed
    based on the same 10 random samplings.
    Dashed lines indicate the corresponding values of the final
    classifiers (recall that at the end we select the model
    with the lowest validation loss out of all epochs).
  }
  \label{fig: PU_comp}
\end{figure}

\subsection{Influence of $\eta$ and $\rho$}

In the proposed algorithm we introduce $\eta$ to control how
$\bar{R}_{s=-1}(g)$ is approximated from data and assume
that $\rho=p(y=-1, s=+1)$ is given.
Here we conduct experiments to see how our method
is affected by these two factors.
To assess the influence of $\eta$,
from \autoref{tab: effectiveness} we pick four
learning tasks and we choose $\tau$
from $\{0.5, 0.7, 0.9, 2\}$ while all the other hyperparameters
are fixed.
Similarly, to simulate the case where $\rho$ is misspecified,
we replace it by $\rho'\in\{0.8\rho, \rho, 1.2\rho\}$ in our
learning method and run experiments with all hyperparameters
being fixed to a certain value.
However, we still use the true $\rho$ to compute $\eta$
from $\tau$ to ensure that we always use the same number of
U samples in the second step of the algorithm
independent of the choice of $\rho'$.

\begin{table}[h!]
  \begin{center}
  \begin{threeparttable}
  \caption{
    Results on four different PUbN learning tasks when we vary
    the value of $\tau$ (and accordingly, $\eta$).
    Reported are means of false positive rates (FPR),
    false negative rates (FNR), misclassification rates (Error),
    and validation losses (VLoss) over 10 trials.\\[-0.2em]
  }
  \label{tab: eta}
    \begin{tabular}{llllrrrr}
      \toprule
      Dataset & P & biased N & $\tau$ &
      FPR & FNR & Error & VLoss \\
      \midrule
      \multirow{4}{*}{MNIST}
      & \multirow{4}{*}{0, 2, 4, 6, 8} 
      & \multirow{4}{*}{1, 3, 5} 
      & 0.5 & 4.79 & 4.32 & 4.56 & 10.11\\
      &&& 0.7 & 3.32 & 4.81 & 4.05 & \textbf{9.15} \\
      &&& 0.9 & 3.29 & 4.40 & \textbf{3.83} & 9.30 \\
      &&& 2 & 3.38 & 5.32 & 4.33 & 10.68 \\[0.2em]
      \cmidrule(l{2pt}r{2pt}){1-8} \\[-0.8em]
      \multirow{4}{*}{CIFAR-10}
      & \multirow{4}{2.5cm}{Airplane, automobile, ship, truck}
      & \multirow{4}{2.5cm}{Horse $>$ deer = frog $>$ others}
      & 0.5 & 8.31 & 12.35 & \textbf{9.92} & \textbf{12.50} \\
      &&& 0.7 & 8.23 & 13.15 & 10.20 & 12.62 \\
      &&& 0.9 & 7.54 & 14.68 & 10.40 & 13.08 \\
      &&& 2 & 6.23 & 20.29 & 11.85 & 13.64 \\[0.2em]
      \cmidrule(l{2pt}r{2pt}){1-8} \\[-0.8em]
      \multirow{4}{*}{CIFAR-10}
      & \multirow{4}{2.5cm}{Cat, deer, dog, horse}
      & \multirow{4}{2.5cm}{Bird, frog}
      & 0.5 & 14.45 & 27.57 & 19.70 & 22.08 \\
      &&& 0.7 & 13.20 & 27.27 & \textbf{18.83} & \textbf{20.72} \\
      &&& 0.9 & 13.00 & 32.61 & 20.84 & 23.78 \\
      &&& 2 & 11.67 & 31.49 & 19.60 & 22.52 \\[0.2em]
      \cmidrule(l{2pt}r{2pt}){1-8} \\[-0.8em]
      \multirow{4}{*}{20 Newsgroups}
      & \multirow{4}{2.5cm}{alt., comp., misc., rec.}
      & \multirow{4}{2.5cm}{soc. $>$ talk. $>$ sci.}
      & 0.5 & 11.28 & 12.90 & \textbf{12.18} & \textbf{16.04} \\
      &&& 0.7 & 11.40 & 13.58 & 12.62 & 16.64 \\
      &&& 0.9 & 10.09 & 16.70 & 13.79 & 16.90 \\
      &&& 2 & 10.34 & 20.55 & 16.06 & 20.99 \\
      \bottomrule
    \end{tabular}
  \end{threeparttable}
  \end{center}

  \begin{center}
  \begin{threeparttable}
  \caption{Mean and standard deviation of misclassification rates
    over 10 trials on different PUbN learning tasks when
    we replace $\rho$ by $\rho'\in\{0.8\rho, \rho, 1.2\rho\}$.
    Underlines indicate significant degradation of performance according
    to the 5\% t-test.\\[-0.2em]
  }
  \label{tab: rho}
    \begin{tabular}{llm{2.7cm}rrr}
      \toprule
      \multirow{2}{*}[-0.4em]{Dataset} 
      & \multirow{2}{*}[-0.4em]{P}
      & \multirow{2}{*}[-0.4em]{biased N}
      & \multicolumn{3}{c}{$\rho'/\rho$} \\[0.2em]
      \cmidrule(l{2pt}r{2pt}){4-6} \\[-1em]
      &&& \multicolumn{1}{r}{0.8}
      & \multicolumn{1}{r}{1}
      & \multicolumn{1}{r}{1.2} \\
      \midrule
      \multirow{2}{*}{MNIST}
      & \multirow{2}{*}{0, 2, 4, 6, 8}
      & 1, 3, 5
      &  $4.10\pm0.39$ & $4.05\pm0.27$ & $4.14\pm0.45$ \\
      && 9 $>$ 5 $>$ others
      &  $3.85\pm0.55$ & $3.91\pm0.66$ & $3.94\pm0.54$ \\
      \cmidrule(l{2pt}r{2pt}){1-6} \\[-0.8em]
      \multirow{2}{*}[-0.6em]{CIFAR-10}
      & \multirow{2}{2.5cm}{Airplane, automobile, ship, truck}
      & Cat, dog, horse
      & $10.23\pm0.59$ & $9.71\pm0.51$ & \underline{$10.32\pm0.57$} \\[0.3em]
      && Horse $>$ deer = frog $>$ others
      & $10.18\pm0.40$ & $9.92\pm0.42$ & $10.05\pm0.59$ \\
      \cmidrule(l{2pt}r{2pt}){1-6} \\[-0.8em]
      \multirow{2}{*}{CIFAR-10}
      & \multirow{2}{2.5cm}{Cat, deer, dog, horse}
      & Bird, frog
      & $18.94\pm0.50$ & $18.83\pm0.71$ & $19.06\pm0.80$ \\
      && Car, truck
      & $20.39\pm1.24$ & $20.19\pm1.06$ & $19.92\pm0.89$ \\[0.2em]
      \cmidrule(l{2pt}r{2pt}){1-6} \\[-0.8em]
      \multirow{3}{*}{20 Newsgroups}
      & \multirow{3}{2.5cm}{alt., comp., misc., rec.}
      & sci.
      & $13.49 \pm 0.61$ & $13.10 \pm 0.90$ & $13.31 \pm 1.05$ \\
      && talk.
      & $12.64 \pm 0.69$ & $12.61 \pm 0.75$ & \underline{$13.77 \pm 0.85$} \\
      && soc. $>$ talk. $>$ sci.
      & \underline{$12.90 \pm 0.79$}
      & $12.18 \pm 0.59$ & \underline{$12.74 \pm 0.35$} \\
      \bottomrule
    \end{tabular}
  \end{threeparttable}
  \end{center}
\end{table}

The results are reported in \autoref{tab: eta} and \autoref{tab: rho}.
We can see that the performance of the algorithm is sensitive
to the choice of $\tau$.
With larger value of $\tau$, more U data are treated as N data
in PUbN learning, and consequently it
often leads to higher false negative rate
and lower false positive rate.
The trade-off between these two measures is a classic problem
in binary classification.
In particular, when $\tau=2$, a lot more U samples are
involved in the computation of the PUbN risk
\eqref{eq: PUbN risk estimator}, but this does not allow the
classifier to achieve a better performance.
We also observe that there is a positive correlation between
the misclassification rate and the validation loss, which
confirms that the optimal value of $\eta$ can be chosen
without need of unbiased N data.

\autoref{tab: rho} shows that in general slight misspecification
of $\rho$ does not cause obvious degradation of the classification performance.
In fact, misspecification of $\rho$ mainly affect the
weights of each sample when we compute
$\hat{R}_{\textrm{PUbN}, \eta, \hat{\sigma}}$
(due to the direct presence of $\rho$ in
\eqref{eq: PUbN risk estimator} and influence on estimating $\sigma$).
However, as long as the variation of these weights remain
in a reasonable range, the learning algorithm should yield
classifiers with similar performances.
% Moreover, we observe that over-misspecified $\rho'$ is usually more
% harmful than under-misspecified $\rho'$.
% This means that putting too much emphasis on bN data will result in
% a biased classifier that performs poorly on the true distribution.

\subsection{Estimating $\sigma$ from Separate Data} \label{app: sig sep}

Theorem \ref{th: bound} suggests that $\hat{\sigma}$ should be
independent from the data used to compute
$\hat{R}_{\textrm{PUbN}, \eta, \hat{\sigma}}$.
Therefore, here we investigate the performance of our algorithm
when $\hat{\sigma}$ and $g$ are optimized using different sets
of data.
We sample two training sets and two validation sets
in such a way that they are all disjoint.
The size of a single training set and a single validation set
is as indicated in Appendix \ref{app: tvt}, except for
20 Newsgroups we reduce the number of examples in a single set
by half.
% since we sample all the training and validation examples
% from a pool of 11293 samples.
We then use different pairs of training and validation sets
to learn $\hat{\sigma}$ and $g$.
For 20 Newsgroups we also conduct standard experiments
where $\hat{\sigma}$ and $g$ are learned on the same data,
whereas for MNIST and CIFAR-10 we resort to
\autoref{tab: effectiveness}.

The results are presented in \autoref{tab: sep}.
Estimating $\sigma$ from separate data does not seem to benefit much
the final classification performance, despite
the fact that it requires collecting twice more samples.
In fact, $\hat{\bar{R}}_{s=-1,\eta,\hat{\sigma}}^-(g)$
is a good approximation of
$\bar{R}_{s=-1,\eta,\hat{\sigma}}^-(g)$
as long as the function $\hat{\sigma}$ is smooth
enough and does not possess abrupt changes between
data points.
With the use of non-negative correction, validation data
and L2 regularization, the resulting $\hat{\sigma}$
does not overfit training data so this should always be
the case.
As a consequence, even if $\hat{\sigma}$ and $g$ are learned
on the same data, we are still able to achieve small generalization
error with sufficient number of samples.

\begin{table}[t]
  \begin{center}
  \begin{threeparttable}
  \caption{Mean and standard deviation of misclassification rates
    over 10 trials on different PUbN learning tasks with
    $\hat{\sigma}$ and $g$ trained using either the same or
    different sets of data.\\[-0.2em]
  }
  \label{tab: sep}
    \begin{tabular}{llm{2.7cm}rr}
      \toprule
      \multirow{2}{*}[-0.3em]{Dataset} 
      & \multirow{2}{*}[-0.3em]{P}
      & \multirow{2}{*}[-0.3em]{biased N}
      & \multicolumn{2}{c}{Data for $\hat{\sigma}$ and $g$} \\
      \cmidrule(l{3pt}r{3pt}){4-5} \\[-1em]
      &&& \multicolumn{1}{c}{Same}
      & \multicolumn{1}{c}{Different} \\
      \midrule\\[-1em]
      \multirow{2}{*}{MNIST}
      & \multirow{2}{*}{0, 2, 4, 6, 8}
      & 1, 3, 5 & $4.05\pm0.27$ & $3.71\pm0.45$ \\
      && 9 $>$ 5 $>$ others & $3.91\pm0.66$ & $4.06\pm0.36$ \\[0.2em]
      \cmidrule(l{2pt}r{2pt}){1-5} \\[-0.8em]
      \multirow{2}{*}[-0.6em]{CIFAR-10}
      & \multirow{2}{2.5cm}{Airplane, automobile, ship, truck}
      & Cat, dog, horse
      & $9.71\pm0.51$ & $10.00\pm0.51$ \\[0.3em]
      && Horse $>$ deer = frog $>$ others
      & $9.92\pm0.42$ & $9.66\pm0.46$ \\
      \cmidrule(l{2pt}r{2pt}){1-5} \\[-0.8em]
      \multirow{2}{*}{CIFAR-10}
      & \multirow{2}{2.5cm}{Cat, deer, dog, horse}
      & Bird, frog
      & $18.83\pm0.71$ & $18.52\pm 0.70$ \\
      && Car, truck
      & $20.19\pm1.06$ & $19.98\pm0.93$ \\[0.2em]
      \cmidrule(l{2pt}r{2pt}){1-5} \\[-0.8em]
      \multirow{3}{*}{20 Newsgroups}
      & \multirow{3}{2.5cm}{alt., comp., misc., rec.}
      & sci.
      & $15.61 \pm 1.50$ & $16.60 \pm 2.38$ \\
      && talk.
      & $17.14 \pm 1.87$ & $15.80 \pm 0.95$ \\
      && soc. $>$ talk. $>$ sci.
      & $15.93 \pm 1.88$ & $15.80 \pm 1.91$ \\
      \bottomrule
    \end{tabular}
  \end{threeparttable}
  \end{center}
\end{table}

\begin{table}[h!]
  \begin{center}
  \begin{threeparttable}
  \caption{Mean and standard deviation of misclassification rates
    over 10 trials on different PUbN learning tasks
    for the two possible definitions of the nnPNU algorithm.\\[-0.2em]
  }
  \label{tab: nnPNU-alt}
    \begin{tabular}{llm{2.7cm}rr}
      \toprule
      Dataset & P & biased N & nnPNU & nnPU + PN \\
      \midrule\\[-1em]
      \multirow{2}{*}{MNIST}
      & \multirow{2}{*}{0, 2, 4, 6, 8}
      & 1, 3, 5 & $5.33\pm0.97$ & $5.68\pm0.78$ \\
      && 9 $>$ 5 $>$ others & $4.60\pm0.65$ & $5.10\pm1.54$ \\[0.2em]
      \cmidrule(l{2pt}r{2pt}){1-5} \\[-0.8em]
      \multirow{2}{*}[-0.6em]{CIFAR-10}
      & \multirow{2}{2.5cm}{Airplane, automobile, ship, truck}
      & Cat, dog, horse
      & $10.25\pm0.38$ & $10.87\pm0.62$ \\[0.3em]
      && Horse $>$ deer = frog $>$ others
      & $9.98\pm0.53$ & $10.77\pm0.65$ \\
      \cmidrule(l{2pt}r{2pt}){1-5} \\[-0.8em]
      \multirow{2}{*}{CIFAR-10}
      & \multirow{2}{2.5cm}{Cat, deer, dog, horse}
      & Bird, frog
      & $22.00\pm0.53$ & $21.41\pm 1.01$ \\
      && Car, truck
      & $22.00\pm0.74$ & $21.80\pm0.74$ \\[0.2em]
      \cmidrule(l{2pt}r{2pt}){1-5} \\[-0.8em]
      \multirow{3}{*}{20 Newsgroups}
      & \multirow{3}{2.5cm}{alt., comp., misc., rec.}
      & sci.
      & $14.69 \pm 0.46$ & $14.50 \pm 1.32$ \\
      && talk.
      & $14.38 \pm 0.74$ & $14.71 \pm 1.01$ \\
      && soc. $>$ talk. $>$ sci.
      & $14.41 \pm 0.70$ & $13.66 \pm 0.72$ \\
      \bottomrule
    \end{tabular}
  \end{threeparttable}
  \end{center}
\end{table}

\subsection{Alternative Definition of nnPNU} \label{app: pnu-alt}

In subsection \ref{subsec: pnu}, we define the nnPNU algorithm
by forcing the estimator of the whole N partial risk
to be positive.
However, notice that the term $\gamma(1-\pi)\hat{R}^-\sub{N}(g)$
is always positive and the chances are that including it simply
makes non-negative correction weaker and is thus
harmful to the final classification performance.
Therefore, here we consider an alternative definition of nnPNU
where we only force the term
$(1-\gamma)(\hat{R}\sub{U}^-(g)-\pi\hat{R}\sub{P}^-(g))$
to be positive.
We plug the resulting algorithm in the experiments
of subsection \ref{subsec: effectiveness} and summarize the results
in \autoref{tab: nnPNU-alt} in which
we denote the alternative version of nnPNU
by nnPU+PN since it uses the same non-negative
correction as nnPU.
The table indicates that neither of the two definitions of nnPNU
consistently outperforms the other.
It also ensures that there is always a clear superiority of
our proposed PUbN algorithm compared to nnPNU
despite its possible variant that is considered here.

\subsection{More on Text Classification} \label{app: cbs}

\citet{fei2015social} introduced CBS learning
in the context of text classification.
The idea is to transform document representation from the traditional
n-gram feature space to a center-based similarity (CBS) space,
in hope that this could
mitigate the adverse effect of N data being biased.
They conducted experiments with SVMs and showed that the transformation
could effectively help improving the classification performance.
However, this process largely reduces the number of input features,
and for us it is unclear whether CBS transformation would still be
beneficial when it is possible to use other kinds of text features
or more sophisticated models.
On the contrary, we propose a training strategy that is a priori
compatible with any extracted features and models.
As mentioned in the introduction, another important difference between
our work and CBS learning
is that the latter does not assume availability of U data while
the presence of U data is indispensable in our case.

\textbf{Experimental Setup.}\enspace
Below we compare PUbN learning with
CBS learning on 20 Newsgroups text classification experiments.
The numbers of different types of examples that are used by
each learning method
are summarized in \autoref{tab: cbs-size},
where PbN denotes the case where only P
and bN data are available.
Notice that here we consider two different PbN learning settings,
depending on the number of used bN samples.
If we compare the two PbN settings with the PUbN setting, for one
we add extra U samples and for the other we replace a part of
bN samples by U samples.
A second point to notice is that no validation data are used in the
PbN settings.
In fact, although we empirically observe that the performance of
CBS learning is greatly influenced by the choice of the
hyperparameters, from \cite{fei2015social} it is unclear how
validation data can be used for hyperparameter tuning.
As a result, for CBS learning we simply report the best results
that were achieved in our experiments.
The reported values can be regarded as an upper bound
on the performance of this method.
By the way, SVM training itself usually does not require
the use of validation data. 

To make PUbN and CBS learning comparable, we use linear model and take
normalized tf-idf vectors as input variables in PUbN learning.
We also include another PbN baseline that directly trains
a SVM on the normalized tf-idf feature vectors.
Regarding CBS learning, we use Chi-Square feature selection
as it empirically produces the best results.
The number of retained features is considered as a hyperparameter
and as mentioned above its value can in effect have
great impact on the final result.
Although \citet{fei2015social} suggested using simultaneously
unigram, bigram and trigram representations of each document,
the use of bigrams and trigrams does not appear to provide
any benefits in our experiments.
Therefore for the results that are presented here we only use
the unigram representation of a document.

\textbf{Results.}\enspace
\autoref{tab: cbs-res} affirms the superiority of PUbN learning,
even in the case where the PbN and PUbN settings share
the same total number of samples.
Only in the third learning task with 1000 bN samples and
without CBS transformation PbN learning slightly
outperforms PUbN learning.
This can be explained by the fact that in this learning task,
though the collected N samples are biased, they still cover
all the possible topics appearing in the N distribution.

CBS transformation does sometimes improve the classification
accuracy, but the improvement is not consistent.
We conjecture that CBS learning is not so effective here
both because our P class contains several distinct topics,
and because our N class is not so diverse compared with \cite{fei2015social}.
Having multiple topics in P implies that there may not
be a meaningful center for the P class in the feature space.
On the other hand, in the results reported by \citet{fei2015social}
we can see that the benefit of CBS learning becomes
less significant when a large proportion of N topics can be
found in the bN set.
In particular, in the last learning task, CBS transformation
even turns out to be harmful.
We also observe that after CBS transformation, the classifier
becomes less sensitive to both how bN data are sampled and
the size of the bN set.
This seems to suggest that CBS learning is the most beneficial
when both the number of topics appeared in the bN set and the
amount of bN data are very limited.

\begin{table}[t]
  \begin{center}
  \begin{threeparttable}
  \caption{Number of samples in each set under different learning
    settings for supplementary 20 Newsgroups experiments that
    compare PUbN learning with PbN learning and \cite{fei2015social}.\\[-0.2em]}
  \label{tab: cbs-size}
  \begin{tabular}{lcccccc@{\hspace{1em}}|>{\centering\arraybackslash}p{1.5cm}}
      \toprule
      & P train & bN train & U train
      & P validation & bN validation & U validation & Total
      \\
      \midrule\\[-1em]
      PUbN
      & 500 & 320 & 500 & 100 & 80 & 100 & 1600\\
      PbN 400
      & 600 & 400 & NA & NA & NA & NA & 1000\\
      PbN 1000
      & 600 & 1000 & NA & NA & NA & NA & 1600\\
      \bottomrule
    \end{tabular}
  \end{threeparttable}
  \end{center}
\end{table}

\begin{table}[t]
  \begin{center}
  \begin{threeparttable}
  \caption{Mean and standard deviation of misclassification rates
    over 10 trials for text classification tasks on 20 newsgroups
    to compare PUbN learning with CBS learning and a PbN baseline.\\[-0.2em]
  }
  \label{tab: cbs-res}
    \begin{tabular}{lrrrrr}
      \toprule
      \multirow{2}{*}[-0.3em]{biased N}
      & \multicolumn{1}{c}{\multirow{2}{*}[-0.3em]{PUbN}}
      & \multicolumn{2}{c}{PbN 400} 
      & \multicolumn{2}{c}{PbN 1000}
      \\
      \cmidrule(l{3pt}r{3pt}){3-4}
      \cmidrule(l{3pt}r{3pt}){5-6} \\[-1em]
      && tf-idf & CBS & tf-idf & CBS \\
      \midrule\\[-1em]
      sci.
      & $\mathbf{13.06 \pm 0.69}$
      & $25.31 \pm 0.72$ & $21.15 \pm 0.73$
      & $19.68 \pm 0.94$ & $17.72 \pm 0.74$ \\
      talk.
      & $\mathbf{14.88 \pm 0.63}$
      & $23.42 \pm 0.44$ & $22.73 \pm 0.81$
      & $19.41 \pm 0.50$ & $19.59 \pm 0.47$ \\
      soc. $>$ talk. $>$ sci.
      & $\mathbf{13.96 \pm 0.61}$
      & $19.82 \pm 0.86$ & $21.68 \pm 0.84$
      & $\mathbf{13.68 \pm 0.43}$ & $17.43 \pm 0.53$ \\
      \bottomrule
    \end{tabular}
  \end{threeparttable}
  \end{center}
\end{table}

%\end{document}

\end{document}